\DeclareMathOperator*{\argmax}{arg\,max}
\author[1]{Vibhhu Sharma\footnote{Corresponding authors: \texttt{vibhhus@andrew.cmu.edu}, \texttt{leqiliu@utexas.edu}}}
\author[1]{Shantanu Gupta}
\author[2]{Nil-Jana Akpinar\footnote{This work was done outside of Amazon.}}
\author[1]{Zachary C. Lipton}
\author[3]{Liu Leqi$^*$}
\affil[1]{Carnegie Mellon University}
\affil[2]{Amazon AWS AI/ML} 
\affil[3]{University of Texas at Austin}
\title{A Unified Causal Framework for Auditing Recommender Systems \lledit{for Ethical Concerns}}
\newtheorem{theorem}{Theorem}[section]
\newtheorem{proposition}[theorem]{Proposition}
\newtheorem{lemma}[theorem]{Lemma}
\newtheorem{definition}[theorem]{Definition}
\newtheorem{example}{Example}
\tikzset{
    -Latex,auto,node distance =1 cm and 1 cm,semithick,
    state/.style ={ellipse, draw, minimum width = 0.7 cm},
    point/.style = {circle, draw, inner sep=0.04cm,fill,node contents={}},
    bidirected/.style={Latex-Latex,dashed},
    el/.style = {inner sep=2pt, align=left, sloped}
}
\begin{document}

\maketitle

\begin{abstract}
  As recommender systems become widely deployed in different domains, 
they increasingly influence their users' beliefs and preferences. 
Auditing recommender systems is crucial as it not only ensures the continuous improvement of recommendation algorithms but also safeguards against potential \llreplace{pitfalls}{issues} like biases and ethical concerns. 
In this paper, we view recommender system auditing from a causal lens and 
provide a general recipe for defining auditing metrics.
Under this general causal auditing framework,
we categorize existing auditing metrics
and identify gaps in them---\vsedit{notably,} the lack of metrics for auditing user agency while accounting for the \vsedit{multi-step} dynamics of the recommendation process. 
We leverage our framework 
and propose two classes of such metrics:
future- and past-reacheability and stability\lldelete{
\sgcomment{add a sentence briefly explaining the metrics. Something like:}.}
\sgedit{\lldelete{We present two classes of metrics to quantify user agency: 
reacheability and stability}, that measure the ability of a user to influence their own and other users' recommendations, respectively.}
We provide both a gradient-based and a black-box approach
for computing these metrics, 
allowing the auditor to compute them
under different levels of access to the recommender system.
In our experiments, 
we demonstrate the efficacy of methods for computing the proposed metrics
and inspect the design of recommender systems through these proposed metrics.

\end{abstract}

\section{Introduction}

Recommender systems actively shape people's online experience by moderating which news, music, social media posts, products, and job postings are most visible to them. 
These socio-technical systems have the potential to influence individuals' choices and beliefs as well as public opinion at large which can have tangible ethical impact \citep{Milano2020,Burki2019,Rafailidis2016}.
Monitoring recommender systems for the potential of \llreplace{adverse}{harmful} effects is a difficult task requiring careful design of evaluation metrics and auditing frameworks.

\llreplace{
\vsedit{Traditional metrics for evaluating recommender systems are observational in nature and only measure the recommender system's ability to recommend items that it perceives to be relevant to the user at a given time, without considering how the dynamic between the user and recommender system evolves over time as both entities continuously influence each other with every interaction. Recommender systems effectively act as information filters and as such, they guide, to large extent, the information a user is exposed to. This opens them up to being agents of harm as users can be exposed to targeted propaganda, misinformation and inappropriate content that is only aggravated over time as user's continue to interact with the recommender system.}
}{
Traditional metrics for evaluating recommender systems are often correlational in nature, focusing on the system’s ability to recommend content perceived as relevant by the users based on their past behavior. However, these metrics tend to overlook critical ethical concerns, particularly how these systems interact with and shape user beliefs and preferences over time. 
Recommender systems act as powerful information filters, determining much of the content users encounter, raising concerns about user agency and the extent to which users control their own online experiences.  
}
 
\llreplace{We argue that the instances of harm in recommendation settings discussed above are \emph{inherently causal} and require a causal framework to be addressed.} 
{
User agency is inherently \emph{causal} because measuring it involves answering \emph{what-if} questions. 
}
For example, the question ``Do users have agency over their recommendations?'' requires causal reasoning on the behavior of the recommender system under behavior shift of the users \citep{dean2020recommendations,curmei2021quantifying,Akpinar2022ICMLworkshop}. Instead of a pure prediction setting, we are interested in the \emph{effect of interventions}.

Causal modeling has gained increasing attention in the ethical AI space in recent years \citep{Coston2020,pmlr-v162-nilforoshan22a,kusner2017counterfactual,Kasirzadeh2021,verma2020counterfactual,prosperi2020causal}. 
\llreplace{Yet, i}{I}n the recommendation setting, researchers have primarily focused on simulation and agent-based modeling to study long-term ethical effects of certain design interventions \citep{Chaney2018,Akpinar2022LongTerm,Aridor2020,Patro2022}.
\lldelete{While causal approaches to recommender systems exist, they frequently focus on estimating the direct effect of single-step recommendations \citep{Sharma2015,Sato2020} or propose new causality-based recommender systems \citep{zhu2023causal}. \llcomment{I have moved this to related work. It's a bit out of the place here.}}
\llreplace{To the best of our knowledge, there has been no previous effort to develop a general evaluation framework for using causal metrics to audit recommender systems. 
}{
While some causal auditing metrics have been proposed \citep{dean2020recommendations,curmei2021quantifying}, a principled framework for reasoning about them in recommender system audits is still lacking. 
}

In this work, we present a causal graphical model to capture the
dynamics of a general class of recommender systems that includes
matrix factorization as well as neural network-based models
(Section~\ref{sec:recsys-setup}).
This causal model allows us to provide a general recipe for defining causal metrics for auditing recommender systems,
and categorize existing auditing metrics based on 
the \lldelete{targets of the} interventions and the outcomes of interest (Section~\ref{sec:existing-metrics}). 

Under this unifying perspective,
we identify the lack of metrics \llreplace{in}{for} measuring user \llreplace{autonomy}{agency}. 
We leverage the causal framework to formalize \lldelete{and extend} two \lledit{classes of} metrics of user agency: past- and future-\emph{reachability} (Section \ref{sec:recheability}), which extends the one defined in \citep{dean2020recommendations} as well as past- and future-\emph{stability} (Section \ref{sec:stability}).
Reachability measures the ability of a user to
be recommended (or \emph{reach}) some desired item at a given time under modifications to their future or historical ratings. 
Stability measures how sensitive a user's recommendations are to edits made by adversarial users to their \vsedit{own} future/past ratings.

Computing these user-agency metrics requires us to solve different optimization problems. 
We provide both a gradient-based and a black-box approach for solving them. 
When the recommender system is learned through matrix factorization, under mild assumptions, we show that the corresponding \lledit{optimization} objectives \lledit{for these metrics} have a special structure \lldelete{(concavity of \eqref{eq:past-reachability}, quasi-convexity of \eqref{eq:past-stability}) \llcomment{We don't need to get into these technical details here.}} that can be leveraged to obtain the optimum efficiently (Section \ref{sec:compute-metrics}).
Using the proposed metrics, 
we inspect how the stochasticity of recommendations and the nature of the recommender system (sequence-dependent versus not) may influence user agency. 
We find, on average, an increase in the stability of a user's recommendations but smaller upgrades in reachability as the stochasticity of the system decreases; and that a standard matrix factorization based recommender system is less stable but facilitates better reachability for users as compared to a deep recurrent recommender network (Section \ref{sec:experiment}). 
We summarize our contributions as follows: 
\begin{itemize}[leftmargin=*, noitemsep, topsep=4pt]
    \item We provide a general causal framework for defining new causal metrics and categorizing existing  metrics for auditing recommender systems in a principled manner (Section~\ref{sec:causal-persp});
    \item Using our proposed framework, we develop two classes of metrics for measuring user agency while accounting for the dynamics of the recommendation process: past- and future-reachability and stability  (Section~\ref{sec:user-centric}). We provide effective ways to compute the metrics, allowing the auditor to have different levels of access to the systems (Section \ref{sec:compute-metrics}).
    \item  Empirically, we investigate two common classes of recommender systems in terms of our proposed user agency metrics and found that higher stochasticity in a recommender system will help with stability but harm reachability (Section \ref{sec:experiment}).
\end{itemize}

\section{Related Work}

\paragraph{Structural Causal Models (SCMs).}
In this work, we use the SCM framework \citep{pearl2009causality} to model causal relationships.
An SCM is a tuple $\mathcal{M} = (\mathbf{U}, \mathbf{V}, \mathcal{F}, \mathbb{P}(\mathbf{U}))$,
where $\mathbf{U}$ and $\mathbf{V}$
are the sets of exogenous and endogenous nodes.
An SCM implies a directed acyclic graph (DAG) 
$\mathcal{G}$ over the nodes $\mathbf{V}$ with $\text{Pa}_i$ denoting
the parents of node $V_i \in \mathbf{V}$.
Each $V_i \in \mathbf{V}$ is generated 
using the structural equation 
$V_i := f_i(\text{Pa}_i, U_i)$,
where $\text{Pa}_i \subset \mathbf{V}$,
$U_i \in \mathbf{U}$, and $f_i \in \mathcal{F}$.
The joint distribution $\mathbb{P}(\mathbf{U})$ induces 
the observational distribution $\mathbb{P}(\mathbf{V})$.
An interventional distribution
is generated by the (modified) 
SCM $\mathcal{M}_x = (\mathbf{U}, \mathbf{V}, \mathcal{F}_x, \mathbb{P}(\mathbf{U}))$,
where $\mathcal{F}_x$ is the set of
functions obtained by replacing the structural equation
for $X \in \mathbf{V}$ with $X := x$.
Informally, it represents the state of the system
after intervening on $X$ and setting it to $x$, 
denoted by $\text{do}(X = x)$.
SCMs also allow us to compute counterfactual quantities
that express what would have happened,
had we set the node $X$ to $x$, 
given that the (factual) event $E$ occurred.
Counterfactuals distributions are generated
using the SCM $\mathcal{M}_{\text{cf}} := (\mathbf{U}, \mathbf{V}, \mathcal{F}_x, \mathbb{P}(\mathbf{U} | E))$.

\paragraph{Recommender System Evaluation.}
In recent years, government organizations and academics have called for more ethical scrutiny in evaluating recommender systems, which has led to an active—but fragmented—area of research.
\vsedit{Prior work on ethic\vsedit{al} \lledit{issues} in recommender systems identified different areas of concerns including fairness, inappropriate content, privacy, autonomy and personal identity, opacity, and wider social effects \citep{Milano2020-fu}}. Past works have proposed a variety of metrics for these areas of concern including measures of fairness \citep{Patro_Porcaro_Mitchell_Zhang_Zehlike_Garg_2022,DBLP:journals/corr/abs-2010-03240}, moral appropriateness of content \citep{Tang2015}, stability \citep{Adomavicius2012}, and diversity \citep{Nguyen2014,Silveira_Zhang_Lin_Liu_Ma_2019,Parapar2021}. The majority of these metrics exclusively rely on observational quantities under a fixed recommendation policy and thereby neglect potential effects caused by the users' behaviors, preferences, or the recommender system itself. 
According to Pearl's three-layer causal hierarchy \citep{pearl2018book}, these metrics are association-based, as they only consider the joint distribution of the recommendations and user feedback without accounting for the underlying causal mechanisms.
\vsedit{Studying a phenomenon like user agency requires answers to causal questions pertaining to how a recommender would respond to interventions made in the user's behavior over a period of time, which cannot be answered by association-based metrics alone.}
\lldelete{By contrast}{Therefore}, the metrics we formalize are interventional and counterfactual. 
Interventional metrics quantify the effect of
hypothetical changes to specific parts of the recommender system.
Counterfactual distributions further encode how 
the system would behave if interventions were performed in hindsight
while taking into account what actually happened.

\paragraph{Auditing Recommender Systems.}
Recently there have been efforts to measure and mitigate (demographic) bias \citep{DBLP:journals/corr/abs-2010-03240} and  user agency (e.g., notions of reachability \citep{dean2020recommendations, curmei2021quantifying}). These works take a step towards interventional metrics by considering the effect of specific interventions on the recommender system's behavior.
\citet{schnabel2016recommendations} view recommendation systems as a mechanism that introduces selection bias,
and answer questions concerning how (aggregated) user feedback will adapt under changes to the recommendation policy.
These metrics fall under the interventional layer of Pearl's causal hierarchy.
Most interventional (or counterfactual) metrics defined for recommender systems choose the give intervention to be the recommendations themselves,
and the outcome of interest in this case is commonly the user feedback \citep{DBLP:journals/corr/abs-2010-03240}.
\citet{dean2020recommendations,curmei2021quantifying} take the intervention to be users' own feedback and the outcome of interest to be the recommendation they receive, 
allowing their proposed metric to answer questions concerning user agency. 
However, existing causal metrics are often limited in the kind of interventions they consider, with few works tapping into new types of interventions and ethical concerns. 
In addition, most are one-step metrics, ignoring the dynamics of the recommender-user interaction over time.
\lledit{
Separately, another line of work integrates causal approaches into recommender systems, often proposing new causality-based models that focus on estimating the direct effect of recommendations on user engagement \citep{Sharma2015,Sato2020}.}

\section{A Causal Perspective on Auditing Recommender Systems}\label{sec:causal-persp}

We first provide a general recommender system setup (Section~\ref{sec:recsys-setup}), its corresponding causal graph (Figure \ref{fig:recsys-causal-graph}), and an example illustrating the general setup. 
We then contextualize existing metrics for auditing recommender systems using this causal framework and provide a general recipe for one to develop new auditing metrics (Section~\ref{sec:existing-metrics}).

\subsection{Recommender System Setup}\label{sec:recsys-setup}

We consider the setting where there are $n$ users and a set of items $\itemset$ to recommend.
Each user $\userindex \in [n]$
has an unobserved vector $\useraction^\star_i \in \mathbb{R}^{|\itemset|}$ indicating the user's initial (true) preference (i.e., ratings) for all items $\itemset$. 
At each time step $t \in [T]$,
a recommender system presents a recommendation (set) $\recommendationRV_{i,t} \subseteq \itemset$ to user $i$.
In turn, the user provides their  feedback/ratings $\useractionRV_{i,t} \in (\mathbb{R} \cup \{\text{unrated}\})^{|\itemset|}$ for the recommendation where $\useractionRV_{i,t}[k]$ is the $k$-th entry of user's rating vector, $\useractionRV_{i,t}[k]= \text{unrated}$ for $k \notin \recommendationRV_{i,t}$, and $\useractionRV_{i,t}[k] \in (\mathbb{R} \cup \{\text{unrated}\})$ indicates user's choice of rating or not rating for item $k \in \recommendationRV_{i,t}$. 
The user-recommender interaction history is denoted by $\historyRV_{i, t} = (\recommendationRV_{i, 1}, \useractionRV_{i, 1}, \cdots, \recommendationRV_{i, t-1}, \useractionRV_{i, t-1})$
and we use $\datasetRV_t = \{\historyRV_{i, t}\}_{i \in [n]}$ to denote the interaction history for all users, or in other words, the training dataset for the recommender at time $t$. 
It is worth noting that both $\recommendationRV_{i,t}$ and $\useractionRV_{i,t}$ are random variables depending on previous user-recommender interactions. 
More specifically, 
the recommendation $\recommendationRV_{i,t}$ depends on the training set $\datasetRV_t$;
the user rating $\useractionRV_{i,t}$ is a function of the user's true initial preference $\useraction^\star_i$, their interaction history $\historyRV_{i, t}$, and possibly other users' feedback in $\datasetRV_{-i,t}$. 
Finally, we let $\datasetRV_{-i, t}$ denote $\datasetRV_t \setminus \historyRV_{i,t}$ and use $\useractionRV_{-i}$ to refer to user ratings for users $[n]\setminus \{i\}$.
When clear from the context, we omit the subscript $i$ or $t$ for discussing the corresponding quantity for all users or time steps. 
For any variable or function $y$, we use $y_{t_1:t_2}$ to refer to $y_{t_1}, \cdots, y_{t_2}$.

\begin{figure}%
\centering
\begin{tikzpicture}[node distance=2cm, every node/.style={draw, circle,minimum size=1.2cm}, 
                    every edge/.style={draw, -{Latex[length=2mm, width=2mm]}}]

    \node (D) at (1, 2) {$\datasetRV_{-i,t}$};
    \node (o*) at (4, 2) {$\useraction^\star_i$};
    \node (O) at (4, 0) {$\useractionRV_{i,t}$};
    \node (A) at (1, 0) {$\recommendationRV_{i,t}$};
    \node (H_u_prev) at (1, -2) {$\historyRV_{i,t}$};
    \node (H_u) at (4, -2) {$\historyRV_{i,t+1}$};
    
    \draw[->] (D) -- (A);
    \draw[->] (D) -- (O);
    \draw[->] (A) -- (O);
    \draw[->] (H_u_prev) -- (O);
    \draw[->] (A) -- (H_u);
    \draw[->] (H_u_prev) -- (H_u);
    \draw[->] (O) -- (H_u);
    \draw[->] (o*) -- (O);
    \draw[->] (H_u_prev) -- (A);
\end{tikzpicture}
\caption{The figure above depicts the causal graph representing a general recommender system at time $t$, specifically pertaining to a user $i$'s interactions with the system. Here, \textbf{$D_{-i,t}$} denotes the interaction history of all users besides $i$ upto $t$, \textbf{$A_{i,t}$} denotes the set of recommendations $i$ receives at $t$, \textbf{$H_{i,t}$} denotes $i$'s interaction history upto $t$, \textbf{$O_{i,t}$} denotes $i$'s feedback at this time and \textbf{$o_{i}^{*}$} denotes $i$'s true preference.  
}
\label{fig:recsys-causal-graph}
\end{figure}
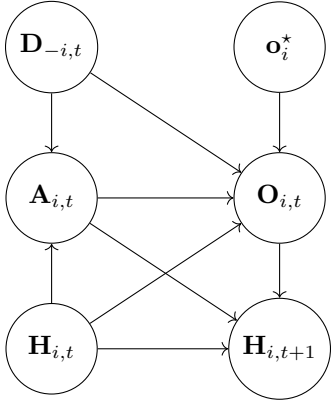

\vsdelete{We use the Structural Causal Model (SCM) framework \citep{pearl2009causality} to capture the causal relationships 
among these variables.
In an SCM, each node $X$ is generated according to a
structural equation of the form $X := f_X(\text{Pa}_X, \epsilon_X)$,
where $f_X$ is some deterministic function, 
$\text{Pa}_X$ denotes the parents of $X$,
and $\epsilon_X$ is an exogenous noise term.
An SCM also encodes interventional distributions, 
denoted by $\mathbb{P}(Y|\text{do}(X = x))$, which 
represents the distribution of $Y$ when the variable
$X$ is set to $x$, i.e., its structural equation 
is replaced with $X := x$ \citep[Defn.~3.2.1]{pearl2009causality}.}
The causal graph for the recommender system is given in Fig.~\ref{fig:recsys-causal-graph}.
We provide a concrete example below.

\begin{example}\label{ex:score-recsys}
In the basic form of a score-based recommender system,  
the initial preference vector $\useraction_i^\star$ is the user's true ratings for item set $\itemset$, the recommendation $\recommendationRV_{i,t}$ is a singleton\footnote{With some abuse of notation, we say $\recommendationRV_{i,t} \in \itemset$ in this case.}, 
and the user rating is given  by $\useractionRV_{i,t} = \useraction^\star_{i}[\recommendationRV_{i,t}]$. 
To determine the recommendation $\recommendationRV_{i,t}$, 
a recommendation algorithm first predicts a score   per user-item pair $(i,j)$, indicating the predicted user preference on item $j \in \itemset$ for user $i \in [n]$. 
To learn the scoring function, one rely on the dataset $\datasetRV_t$ and may use approaches like matrix factorization (more details in Section~\ref{sec:compute-metrics}).
Finally, given the score,  
a recommendation algorithm may recommend differently, e.g., deterministically according to the highest score on unseen items for the user, or stochastically among the top scoring items.
\end{example}

From an auditor's perspective, 
one may be interested in defining metrics to measure how the ratings among users correlate with each other (associational metrics), 
how the recommendation algorithm may cause the popularity of an item (interventional metrics),
or how much change a user can make over their recommendations had they behaved differently under the same historical recommender (counterfactual metrics).
Hereafter, we provide a discussion of associational, interventional, and counterfactual metrics for auditing a recommender system, categorizing existing metrics using this causal framework, and illustrating how one can use this unifying perspective to define new metrics.

\subsection{Causal Perspective on Auditing Recommender Systems}\label{sec:existing-metrics}

Existing metrics for auditing recommender systems often focus on the \emph{association} between variables concerning different qualities of recommendations under the current system. Metrics such as the diversity of the recommended items, and the average ratings or popularity of items are of this kind since they are defined over the observational distribution 
$\mathbb{P}(\{\recommendationRV_{i,t}, \useractionRV_{i,t}\})$ (e.g., \citep{abdollahpouri2019managing,73}). 

When defining \emph{interventional} and \emph{counterfactual} metrics, 
one needs to be explicit about the intervention to apply, 
the outcome one cares about, and the distributions involved in obtaining the metric. 
More specifically, to define a metric of causal nature, one needs to specify the following three quantities of interests:
\textit{(i)} the intervention;
\textit{(ii)} the (random) outcome of interest; 
and \textit{(iii)} a metric (or in other words, a functional) that maps the random outcome of interest to a real value.
This process allows us to categorize existing causal metrics used for auditing recommender systems based on the nature of the intervention and outcome of interest:
\begin{itemize}[leftmargin=*, noitemsep, topsep=4pt]
    \item Recommendation $\recommendationRV_i$ as intervention: In this case, the auditor interrogates 
    the effect of changing the recommendation algorithm. 
    Often, the outcome of interest is user ratings $\useractionRV_i$, and the corresponding interventional distribution is $\mathbb{P}(\useractionRV_i | \text{do}(\recommendationRV))$ (e.g., exposure bias~\citep{chen2023bias}). 
    This quantity is often at the core of a recommendation algorithm and is studied the most, as recommender systems use predicted user preferences to make recommendations, through optimizing the recommendations against user interests, e.g., $\max_{\recommendation_i} \mathbb{P}(\useractionRV_i | \text{do}(\recommendationRV = \recommendation_i))$\footnote{On a separate note, an active line of work has centered around using observational data to estimate this quantity, e.g., \citet{schnabel2016recommendations}.}. 
    \item Other user's data $\datasetRV_{-i}$ as intervention: Here, the auditor may want to investigate under the current recommender system, what would happen to a particular user (or a particular group of users) if other users have changed their behavior. For example, in measuring conformity bias (i.e., how other users' ratings may influence the user's reaction to the same item) \citep{chen2023bias}, the auditor inspects whether the probability of the user's own rating change as one intervenes on other user's historical ratings/data. The corresponding interventional distribution is $\mathbb{P}(\useractionRV_{i} |  \text{do}(\datasetRV_{-i}))$.  
    \item User's own reaction $\useractionRV_{i}$ (or $\historyRV_i$) as intervention: The auditor may be interested in inspecting the amount of change a user's own reaction may cause in their recommendations. In this case, The outcome of interest, as we will discuss in Section~\ref{sec:user-centric}, can be the user's recommendation \citep{dean2020recommendations}, i.e., the distribution of interest is $\mathbb{P}(\recommendationRV_i | \text{do}(\useractionRV_i))$. 
\end{itemize}
With this perspective and the proposed three-step recipe for defining new metrics, 
an auditor can identify gaps in existing metrics and define new ones.  
As an example, in Section \ref{sec:user-centric}, 
we define a suite of user-centric causal metrics using this framework. 
Finally, there is often less discussion on counterfactual metrics in auditing recommender systems, since it involves inspecting quantities that are hard (if not impossible) to obtain in practice. 
We provide more discussion on this in Section \ref{sec:user-centric}. 
We also note that the aforementioned metrics are only example metrics that one can categorize using this framework. There are other metrics that fit in this framework (e.g., selection bias and position bias \citep{chen2023bias}) that we do not go into details.

\section{\llreplace{User-Centric Causal Metrics}{User-Centric Causal Metrics for Auditing User Agency}} \label{sec:user-centric}

\lldelete{In the following, we focus on introducing user-centric causal metrics in auditing a recommender system. }
\vsedit{While much attention has been given to \lldelete{various aspects of recommender system performance}{auditing various ethical concerns of recommender systems}, there has been comparatively little work conducted on measuring user \lldelete{autonomy and }agency. 
This gap is particularly notable given the rich line of qualitative work emphasizing the importance of user agency \citep{Milano2020}. 
\llreplace{This work conceptualizes user agency as}{User agency is} a user’s power over their own recommendations compared to recommendations being driven by external forces like other users' behaviors or algorithmic profiling \citep{deVries2010}. 
\llreplace{User agency}{It} can be compromised in various ways, several of which we target with the metrics we propose. 
First, recommender systems can enforce filter bubbles that restrict users from diverse content feeds and amplify biases \citep{Milano2020-fu}. 
Second, recommendation algorithms can be vulnerable to strategic behavior and adversarial attacks that alter recommendations for unrelated users \citep{Milano2020-fu}. 
For example, consider content duplication attacks on e-commerce platforms \citep{Frobe}. 
In this setting, providers game the recommendation system by duplicating item listings with little to no changes to maximize the probability of recommendation. Maintaining user agency over their own recommendations in this scenario requires recommendation stability. 
\lldelete{As such, user agency ties directly to core ethical concerns of recommender systems. \llcomment{We made this point pretty clear already.}}
The user-centric causal metrics we define in this section are designed with the intent of quantifying \lldelete{this notion of} user agency from \lledit{these} two separate viewpoints: \lledit{reachability measures the extent to which a user can escape filter bubbles (Section~\ref{sec:recheability}), while stability assesses the influence that other strategic users can have on this user's recommendations (Section~\ref{sec:stability}).}
}

When defining these metrics, we take the interventions as user's feedback $\useractionRV_i$ or other user's feedback $\useractionRV_{-i}$, and the outcome of interests as their recommendations $\recommendationRV_i$.
We call these metrics user-centric since the interventions are user feedback (actions that users can take), 
and the outcome of interest is the user's own recommendations (that are consumed by the user).
In Section~\ref{sec:recheability}, we extend existing metrics for auditors to measure the autonomy an user can have regarding their own recommendations;
In Section~\ref{sec:stability}, 
we develop new metrics for auditors to measure the influence a user's choice can have on other users' recommendations.
Throughout, we provide both interventional and counterfactual metrics and discuss how these metrics connect to the dynamics of the recommendation process.

\subsection{\llreplace{Metrics on User Agency:}{Past- and Future-}Reachability}\label{sec:recheability}

A recent line of work develops metrics for inspecting user agency, with the most relevant work introducing the concept of (maximum) stochastic recheability \citep{curmei2021quantifying, dean2020recommendations}. 
Conceptually, stochastic reachability measures the maximum change in recommendation probability on item $j$ for user $i$ upon modifying their ratings for a predefined set of items $\recommendation$, e.g., historical recommendations or unseen items for the user. 
Under our framework, the original maximum stochastic recheability for a user-item pair $(i, j)$ at time $t$  can be rewritten as: %
\llreplace{
\begin{align*}
    \max_{\ratefn \in \ratefnSpace} \mathbb{P}(\recommendationRV_{i, t+1} = \itemindex |\recommendationRV_{i,t} = \recommendation, \text{do} (\useractionRV_{i,t} =\ratefn(\recommendation))),
\end{align*}
}{
\begin{align*}
    \max_{\ratefn \in \ratefnSpace} \mathbb{E}_{\recommendationRV_{i, t}} \left[ \mathbb{P}(\recommendationRV_{i, t+1} = \itemindex |\text{do} (\useractionRV_{i,t} =\ratefn(\recommendationRV_{i,t}))) \right],
\end{align*}
}
\lldelete{
\textcolor{red}{SG: Can we rewrite this as follows (makes it more consistent with the future-k notation)?:
\begin{align*}
    \max_{\ratefn \in \ratefnSpace} \mathbb{P}(\recommendationRV_{i, t+1} = \itemindex |\text{do} (\useractionRV_{i,t} =\ratefn(\recommendationRV_{i,t}))),
\end{align*}
}
\llcomment{makes sense!}
}
where $\ratefnSpace$ is the set of all measurable functions that map a recommendation set $\recommendation$ to a user rating vector $\useraction$,
and $\recommendation$ is a fixed set of (recommendation) items that can be randomly generated from the unseen or historically recommended set.
By rewriting the original stochastic reachability using our framework,
we identify that the original metric allows the auditor to inspect user agency for a single time step 
and the intervention is specific to user ratings for a particular set of predefined recommendations $\recommendation$ (that are not determined by the dynamics of the recommendation process).
In does not allow an auditor to inspect user agency under the natural evolution of a recommender system as users apply interventions over time. 
To this end, we introduce the following (maximum) future-$k$ reacheability.

\begin{definition}[future-$k$ reachability]
For any user $i \in [n]$ and item $j \in \itemset$, at time $t \in \mathbb{N}_+$,
their maximum future-$k$ reachability for $k \in \mathbb{N}_+$ is given by
\begin{equation}\label{eq:future-reachability}
    \max_{\ratefn_{1:k} \in \ratefnSpace^k} \mathbb{E}_{\recommendationRV_{i, t:t+k-1}} \left[ \mathbb{P}(\recommendationRV_{i, t+k} = \itemindex | \text{do} (\useractionRV_{i,t} =\ratefn_1(\recommendationRV_{i,t})), \ldots, \text{do} (\useractionRV_{i,t+k-1} =\ratefn_k(\recommendationRV_{i,t+k-1})) \right],
\end{equation}
where the expectation is over the recommendation trajectory
$\recommendationRV_{i,t:t + k - 1}$ and 
$\ratefnSpace^k = \ratefnSpace \times \cdots \times \ratefnSpace$. 
\end{definition}
In this case, the intervention $\ratefn_{t'}(\recommendationRV_{i,t + t'- 1})$
would affect the distribution of $\recommendationRV_{i,t + t'}$, the recommendations in the next time step.
This metric formalizes a notion of reachability where
the user can arbitrarily change the rating of 
the recommended item, but the item trajectory \vsedit{(the sequence of items they choose to rate)} follows
the (future) dynamics of the recommender system itself.

In parallel, we define another metric that quantifies reachability
under edits to the history over the past $k$ time steps,
thus allowing the auditor to inspect user agency retrospectively. 
Consider a recommender system at time $t$, that we call
the \emph{factual} recommender system.
We denote the \emph{factual} trajectory for a user 
$i$ over the past $k$ time steps by
${\history}^\star_{i, t-k:} = (\recommendation^\star_{i, t-k}, \useraction^\star_{i, t-k}, \cdots, \recommendation^\star_{i, t-1}, \useraction^\star_{i, t-1})$.
Now, consider a \emph{counterfactual} recommender system 
that is retrained at time $t$ by modifying the ratings in 
${\history}^\star_{i, t-k:}$ to a counterfactual history 
$\history_{i, t-k:}$,
while keeping everything else the same as the \emph{factual}
recommender system.

\begin{definition}[Past-$k$ reachability]\label{defn:past-k-reachability}
For any user $i \in [n]$ and item $j \in \itemset$, at time $t \in \mathbb{N}_+$,
their maximum past-$k$ reachability for $k \in \mathbb{N}_+$ is given by 
\begin{align}\label{eq:past-reachability}
    \max_{\ratefn_{1:k} \in \ratefnSpace^k} \mathbb{E}_{{\history}^\star_{i, t-k:}} \left[ \mathbb{P}(\recommendationRV_{i, t} = \itemindex | \text{do}({\historyRV}_{i, t-k:} = \history_{i, t-k:})) \right],
\end{align}
where $\history_{i, t-k:} = (\recommendation^\star_{i, t-k},\ratefn_1(\recommendation^\star_{i, t-k}), \cdots, \recommendation^\star_{i, t-1}, \ratefn_k(\recommendation^\star_{i, t-1}))$,
$\recommendationRV_{i, t}$ denotes the counterfactual recommendation under the edited history
$\history_{i, t-k:}$.
The expectation is taken over the `factual' history ${\history}^\star_{i, t-k:}$
and the edited history $\history_{i, t-k:}$ 
depends on the factual history ${\history}^\star_{i, t-k:}$.
\end{definition}

This metric is a counterfactual quantity \citep[Ch.~7]{pearl2009causality}
as it involves random variables from both the factual and counterfactual
recommender systems.
Informally, in past-$k$ reachability, we maximize the counterfactual
recommendation probability where the item trajectory is fixed
to the factual trajectory, but the ratings can be arbitrarily edited.
By contrast, in the future-$k$ metric, the trajectory
follows the recommender dynamics 
(instead of being set to the factual trajectory).

\subsection{\llreplace{Metrics on User Influence:}{{Past- and Future-}}(In)stability}\label{sec:stability}

We introduce a new class of metrics that allow the auditor to measure the influence other users have on a particular user's recommendations. 
In addition to their own feedback and the recommendation algorithm itself,
a user's autonomy is also influenced by other users' feedback to the recommender systems.
The class of (in)stability metrics we define below allows auditors to inspect user agency through this angle.

\begin{definition}[Future $k$-(In)stability]
Given user $i_1, i_2 \in [n]$ such that $i_1$ is the user of interest and $i_2$ is the user who can update their feedback,
at time $t \in \mathbb{N}_+$,  
user $i_1$'s future-$k$ maximum instability with respect to user $i_2$ for $k \in \mathbb{N}_+$ is given by
\begin{align}\label{eq:future-stability}
    \max_{\ratefn_{1:k} \in \ratefnSpace^k}  &\mathbb{E}_{\recommendationRV_{i_2, t:t+k-1}} 
    \bigg[d(\mathbb{P}(\recommendationRV_{i_1, t+k} | \notag\\
    &do(\useractionRV_{i_2,t} = \ratefn_1(\recommendationRV_{i_2,t})) , \ldots, do(\useractionRV_{i_2,t_0+k-1} = \ratefn_k(\recommendationRV_{i_2,t+k-1}))), \mathbb{P}(\recommendationRV_{i_1, t+k}))
    \bigg],
\end{align}
where $d: \Delta \times \Delta \to \mathbb{R}_{\geq 0}$ measures the distance between two probability distributions.
\end{definition} 

In this case, an auditor seeks to intervene on the ratings given by user $i_2$ and observe how these interventions affect the recommendations received by user $i_1$, following the dynamics of the recommendation process. 
The metric is defined as the deviance between the recommendation distribution before and after the intervention. 
This allows the auditor to measure how unstable (or in another way, how manipulatable) the recommendation for user $i_1$ can be with respect to user $i_2$'s feedback.

Similar to the past-$k$ reachability metric (see Defn.~\ref{defn:past-k-reachability}), 
we now define a counterfactual past-$k$ stability metric
which quantifies the stability of user $i_1$
under edits to the history of user $i_2$.
Informally, we seek to maximally change the recommendation
probability of item $j$ for user $i_1$ by editing the ratings of user $i_2$,
while keeping their item trajectory to the factual one.

\begin{definition}[Past $k$-(In)stability]
Let ${\history}^\star_{i_2, t-k:} = (\recommendation^\star_{i_2, t-k}, \useraction^\star_{i_2, t-k}, \cdots, \recommendation^\star_{i_2, t-1}, \useraction^\star_{i_2, t-1})$ 
denote the \emph{factual} recommendation history for user $i_2$.
We define user $i_1$'s past-$k$ maximum instability with respect to user $i_2$ for $k \in \mathbb{N}_+$ as
\begin{align}\label{eq:past-stability}
    \max_{\ratefn_{1:k} \in \ratefnSpace^k}
    \mathbb{E}_{{\history}^\star_{i_2, t-k:}} \left[
    d\left( \mathbb{P}(\recommendationRV_{i_1, t} = \itemindex | \text{do}(\historyRV_{i_2, t-k:} = \history_{i_2, t-k:})), \mathbb{P}(\recommendationRV_{i_1, t} = \itemindex) \right) \right],
\end{align}
where $\history_{i_2, t-k:} = (\recommendation^\star_{i_2, t-k},\ratefn_1(\recommendation^\star_{i_2, t-k}), \cdots, \recommendation^\star_{i_2, t-1}, \ratefn_k(\recommendation^\star_{i_2, t-1}))$ is the 
edited counterfactual history
(which depends on the factual history ${\history}^\star_{i_2, t-k:}$), 
and $\recommendationRV_{i_1, t}$ is the counterfactual recommendation.
\end{definition}

\vsedit{While both interventional and counterfactual metrics measures different aspects of user agency, there is a distinction in their scope:
\begin{itemize}[leftmargin=*, noitemsep, topsep=4pt]
    \item Past-/counterfactual metrics focus on how user behavior (e.g., the items a user chose to rate) contributes to the recommendations they receive in the present. For example, consider a social media user who primarily receives recommendations for cat videos in the present. Counterfactual metrics help us understand whether the narrowness in the recommendations can be attributed to the recommendation system, or the user’s behavior in the past, which imply vastly different conclusions in terms of user agency. If engaging with cat videos unfavorably in the past would have led to more diverse recommendations in the present, the observed narrow recommendations do not imply a violation of user agency.
    \item Future-/interventional metrics focus on the user’s radius of influence on recommendations in the future. Assume the user who likes cat videos adjusts their preferences and now shows interest in dog videos. Interventional metrics help us understand the ability of changed user behavior to steer upcoming recommendations. If alterations in behavior (engaging with more dog videos) do not lead to diversification of recommendations away from only cat content, we can conclude that there is a violation of user agency for their own recommendations.
\end{itemize}}

 \vsedit{\llreplace{Both reachability and stability can be conceptualized as measures of user agency, though the two metrics focus on different aspects of it}{Though both reachability and stability are measures of user agency, the two metrics focus on different aspects of it}: stability focuses on the influences of behavior changes of other users, while reachability focuses on influences of the individual at hand. 
 \lledit{Empirically, in our experiments with embedding-based recommender systems, we observe a trade-off between these two metrics (Section~\ref{sec:experiment}). Further investigation into this trade-off is an interesting direction for future work.}
\lldelete{For the embedding-based recommender systems we experiment with in this paper, intuitively, higher reachability suggests that users have more control over their embeddings (through adjusting their ratings). When the item embeddings are relearned using these user embeddings or ratings, recommendations for other users will be influenced, suggesting lower stability of the system and a tradeoff between a model's performance on these two classes of metrics. \lldelete{Traditional observational metrics used to evaluate recommender systems like accuracy aim to benchmark a model's ability to learn the user's static preferences but cannot quantify user agency or the control a user has over their own recommendations, necessitating this new regime of metrics.}}}

In the following, we provide an operationalized procedure for computing these user-centric metrics; the exact implementation details can be found in Section~\ref{sec:experiment} and Appendix~\ref{sec:expt_det}.

\section{Operationalized Procedure for Computing \llreplace{User-Centric}{User Agency} Metrics}\label{sec:compute-metrics}

When computing the user-centric causal metrics,
we need to consider two aspects:
\textit{(i)} whether the auditor can intervene 
on the recommender system, i.e., 
if they have access to the interventional distribution; and 
\textit{(ii)} whether they can obtain gradients of  
the objective functions in~\eqref{eq:future-reachability}-\eqref{eq:past-stability}.
By obtaining the gradients, the metrics can be computed through methods like gradient descent.
Similar to \citet{dean2020recommendations}, 
we consider cases where the auditor can apply intervention on the recommender system, 
thus addressing concerns around \textit{(i)}.  
We provide methods that allow auditors to have different levels of access to the internals of the system (e.g., they may not directly have access to the required gradients).  
Before delving into our methods,
we first provide a high-level discussion on the required gradients. 
Consider the case where the auditor aims to compute~\eqref{eq:future-reachability} when $k=1$,
we need
\begin{align*}
    &\nabla_{\ratefn_{1}} \mathbb{E}_{\recommendationRV_{i, t}} \left[ \mathbb{P}(\recommendationRV_{i, t+1} = \itemindex | \text{do} (\useractionRV_{i,t} =\ratefn_1(\recommendationRV_{i,t}))) \right] \\
    =& \sum_{\recommendation_{i, t}} \mathbb{P}(\recommendationRV_{i, t} =\recommendation_{i, t} )  \nabla_{\ratefn_{1}} \mathbb{P}(\recommendationRV_{i, t+1} = \itemindex | \text{do} (\useractionRV_{i,t} =\ratefn_1(\recommendation_{i,t}))) ,
\end{align*}
where $\nabla_{\ratefn_{1}}$ refers to gradient with respect to the parameters of $\ratefn_{1}$
and the equality holds because of Fubini's theorem. 
In this case, the auditor needs to effectively compute $\nabla_{\ratefn_{1}} \mathbb{P}(\recommendationRV_{i, t+1} = \itemindex | \text{do} (\useractionRV_{i,t} =\ratefn_1(\recommendation_{i,t})))$ for all $\recommendation_{i,t} \subset \itemset$. 
In the following, 
we first discuss cases where 
the auditor has access to the actual gradients. 
We show that under mild conditions, for matrix factorization based recommender systems, 
the objective in \eqref{eq:past-reachability} is concave
and \eqref{eq:past-stability} is quasi-convex and has its optimum at extreme points (Section \ref{sec:compute-metrics-first-oder}). 
For cases where the auditor only has access to zeroth-order information of the system, we provide an effective approach for approximating the required gradient (Section~\ref{sec:zeroth}).
\vsedit{We show simplified form of the gradients we require access to for future-$k$ reachability/stability in Appendix \ref{sec:all_grad}, similar to the one step gradient above.}

\subsection{White-box Access}\label{sec:compute-metrics-first-oder}
In the most desirable setting, the auditor has what we term as white-box access to the recommender system. 
In this case, the auditor has access to the gradient of the interventional distribution of interest. 
We consider two types of score-based systems where the recommendation probability is given by
\begin{align}\label{eq:score}
    \mathbb{P}(\recommendationRV_{i, t+1} = \itemindex | \text{do} (\useractionRV_{i,t} =\ratefn_1(\recommendation_{i,t})))
    \propto \exp(\beta \mathbf{p}_i^\top \mathbf{q}_j),
\end{align}
where $\beta >0$ controls the stochasticity of the system, 
the user embedding $\mathbf{p}_i$ and item embedding $\mathbf{q}_j$ depend on the intervention $\useractionRV_{i,t}$ (and $\useractionRV_{-i,t})$. 
The two types of systems learn $\mathbf{p}_i$ and $\mathbf{q}_j$ differently---one uses matrix factorization and the other uses an LSTM-based approach.
For more details, we refer the readers to Appendix~\ref{sec:score-based-recsys}.

To obtain the gradient $\nabla_{f_1} \exp(\beta \mathbf{p}_i^\top \mathbf{q}_j)$,
we need to know how the systems updates their user and item embeddings when user ratings change.
For matrix factorization models, when computing reachability (i.e., when users change their own ratings),
we assume item embedding $\{\mathbf{q}_j\}_{j \in \itemset}$  %
are fixed and user embedding $\mathbf{p}_i$ is updated according to the matrix factorization objective.
Effectively, we are assuming that the user's own ratings only affects their own embedding. 
The closed form of the update can be found in Appendix \ref{sec:compute-metrics-proof}.
Under this assumption, we have the below result.

\begin{proposition}\label{prop:past-recheability}
When $\{\mathbf{p}_i\}_{i \in [n]}$ and $\{\mathbf{q}_j\}_{j \in \itemset}$ are learned through matrix factorization, 
the past-$k$ reachability objective \eqref{eq:past-reachability} is concave in the parameters of $\ratefn_{1:k}$ 
if item embeddings $\{\mathbf{q}_j\}_{j \in \itemset}$ %
are fixed and $\mathbf{p}_i$ is updated according to the matrix factorization objective when the ratings $\useractionRV_i$ are changed.
\end{proposition}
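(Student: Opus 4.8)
The plan is to fix a realization of the factual history $\history^\star_{i, t-k:}$, analyze the inner counterfactual recommendation probability as a function of the parameters $\theta$ of $\ratefn_{1:k}$, and only afterward deal with the outer expectation. The first step is to make the dependence of $\mathbf{p}_i$ on $\theta$ explicit. Because the item trajectory in past-$k$ reachability is pinned to the factual one, the recommendation sets $\recommendation^\star_{i, t-k}, \dots, \recommendation^\star_{i, t-1}$ are fixed, so the only free quantities are the edited ratings $\ratefn_1(\recommendation^\star_{i, t-k}), \dots, \ratefn_k(\recommendation^\star_{i, t-1})$, which I treat as the parameters $\theta$. Under the stated assumption that the item embeddings $\{\mathbf{q}_j\}$ are frozen and $\mathbf{p}_i$ is refit by the matrix-factorization objective, $\mathbf{p}_i$ solves a ridge-regression problem and hence has the closed form $\mathbf{p}_i = (Q^\top Q + \lambda I)^{-1} Q^\top \mathbf{r}(\theta)$, where $Q$ stacks the frozen embeddings of the rated items and $\mathbf{r}(\theta)$ is the rating vector (affine in $\theta$). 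I would invoke the closed-form update promised in the appendix; the only structural fact I need is that $\theta \mapsto \mathbf{p}_i$ is affine.

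Given this, each score $s_{j'} := \beta\, \mathbf{p}_i^\top \mathbf{q}_{j'}$ is affine in $\theta$, and the recommendation probability from \eqref{eq:score} is the softmax $\exp(s_j)/\sum_{j'} \exp(s_{j'})$. I would then pass to logarithms, writing $\log \mathbb{P}(\recommendationRV_{i,t} = \itemindex \mid \cdots) = s_j - \log \sum_{j'} \exp(s_{j'})$. The first term is affine and the second is a log-sum-exp, which is convex and remains convex under the affine precomposition $\theta \mapsto (s_{j'}(\theta))_{j'}$; hence the log-probability is concave in $\theta$, i.e. the integrand is log-concave. Equivalently, maximizing the probability is the convex program $\min_\theta \sum_{j' \neq \itemindex} \exp(s_{j'} - s_j)$, a sum of exponentials of affine functions.

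The step I expect to be the main obstacle is passing from this per-realization log-concavity to a statement about the full objective \eqref{eq:past-reachability}, which is an expectation of \emph{probabilities} rather than of log-probabilities. A single softmax coordinate is not globally concave—the two-item case reduces to a logistic sigmoid, which is convex below its inflection point—so concavity cannot follow from a naive Hessian computation, and an average of log-concave integrands need not be concave. I would resolve this by arguing in the log domain: for each fixed factual history the maximizer of the concave $\log \mathbb{P}$ coincides with that of $\mathbb{P}$, so the reachability optimization is a concave program whose optimum is reachable by gradient ascent, which is exactly the property the proposition is used for in \Cref{sec:compute-metrics}. In the write-up I would make precise the sense in which ``concave'' is meant (log-concavity of the recommendation probability, or convexity of the equivalent minimization) and verify that the expectation over $\history^\star_{i, t-k:}$ interacts correctly with this reformulation—either by treating the objective realization-wise or by noting that the factual trajectory is observed, so the outer expectation contributes a single term.
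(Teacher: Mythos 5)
Your proposal follows essentially the same route as the paper's proof: under the frozen-item-embedding assumption the refit user vector $\mathbf{p}_i$ is an affine function of the edited ratings, so each score $\beta\,\mathbf{p}_i^\top\mathbf{q}_{j'}$ is affine, and the log of the softmax recommendation probability is an affine function minus a log-sum-exp of affine functions, hence concave. The one place you go beyond the paper is your worry about the objective being an expectation of probabilities rather than of log-probabilities: the paper's proof silently makes exactly the move you propose --- it fixes the factual trajectory and establishes concavity only of the log-probability, never returning to the expectation in \eqref{eq:past-reachability} --- so your explicit caveat that ``concave'' must be read as per-realization log-concavity (equivalently, that the maximization is a concave program for each fixed factual history) is a fair sharpening of an imprecision in the stated proposition rather than a divergence from its proof.
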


\vspace{-0.2cm}
For computing stability (i.e., when other users change their ratings),
we assume the user embedding $\{\mathbf{p}_i\}_{i \in [n]}$  %
are fixed and item embedding $\mathbf{q}_j$ is updated according to the matrix factorization objective.
Effectively, we are assuming that other users affect user $i$'s recommendation through item embeddings. 
The closed form of the update can be found in Appendix \ref{sec:compute-metrics-proof}.
Under this assumption, we have that past-$k$ stability is obtained at boundary points.

\begin{proposition}\label{prop:past-stability}
When $\{\mathbf{p}_i\}_{i \in [n]}$ and $\{\mathbf{q}_j\}_{j \in \itemset}$ are learned through matrix factorization, 
the past-$k$ stability objective \eqref{eq:past-stability} is quasi-convex in the parameters of $\ratefn_{1:k}$  and achieves its optimal value at a boundary point of the domain,
if user embeddings $\{\mathbf{p}_{i}\}_{i \in [n]}$ are fixed and item embeddings $\{\mathbf{q}_j\}_{j \in \itemset}$ are updated according to the matrix factorization objective when user $i_2$'s ratings $\useractionRV_{i_2}$ are changed, $d$ is the $L_2$ distance.
\end{proposition}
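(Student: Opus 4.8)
The plan is to exploit the closed-form matrix-factorization update to linearize the problem, reduce the softmax to a normalized affine map, and then read quasi-convexity directly off the $L_2$ objective written as a convex-over-affine ratio. Throughout I fix the factual trajectory ${\history}^\star_{i_2, t-k:}$ (the counterfactual quantity conditions on it), so the only free variables are the $k$ edited ratings $\theta = (\ratefn_1(\recommendation^\star_{i_2, t-k}), \ldots, \ratefn_k(\recommendation^\star_{i_2, t-1}))$, each constrained to the bounded rating interval. With the user embeddings $\{\mathbf{p}_i\}$ held fixed, the matrix-factorization objective in the item embeddings separates across items, so each $\mathbf{q}_j$ solves an independent ridge regression whose closed form (Appendix~\ref{sec:compute-metrics-proof}) is affine in the ratings received by item $j$ and independent of every other item's ratings. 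Hence editing user $i_2$'s rating on the item recommended at step $t-k+\ell-1$ moves only that item's embedding, affinely in the single coordinate $\theta_\ell$. Substituting into \eqref{eq:score}, each logit $s_j = \beta\, \mathbf{p}_{i_1}^\top \mathbf{q}_j$ is affine in $\theta$, the logit of each edited item depends on its own coordinate only, and the logits of all unedited items are constant in $\theta$.

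Next I would reparametrize to remove the softmax nonlinearity. Writing $y_\ell = \exp(s_{j_\ell})$ for the edited items, the decoupling makes each $y_\ell$ a strictly monotone scalar function of a single rating, so as $\theta$ ranges over its box, $y$ ranges over a box. The distribution of interest then reads $\mathbb{P}(\recommendationRV_{i_1, t} = \itemindex \mid \ldots) = N(y)/\mathbf{1}^\top N(y)$, where $N(y) = \sum_\ell y_\ell\, e_{j_\ell} + C$ is affine and nonnegative in $y$ (with $C$ collecting the constant contributions of the unedited items). Thus the set of reachable recommendation distributions for $i_1$ is exactly the normalization of an affine image of a box.

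Finally I would read off both claims. With $d$ the $L_2$ distance and $\|\mathbb{P}(\recommendationRV_{i_1,t})\|_1 = 1$, the integrand of \eqref{eq:past-stability} equals $g(y) = \|L(y)\|_2 / m(y)$, where $L(y) = N(y) - (\mathbf{1}^\top N(y))\,\mathbb{P}(\recommendationRV_{i_1,t})$ is affine and $m(y) = \mathbf{1}^\top N(y) > 0$ is affine. Each sublevel set $\{g \le \gamma\} = \{\|L(y)\|_2 - \gamma\, m(y) \le 0\}$ is the $0$-sublevel set of a convex function, hence convex, so $g$ is quasi-convex; for $k=1$ this transfers verbatim to the rating parameter since $y$ is then a scalar monotone reparametrization. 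A continuous quasi-convex function on a compact polytope satisfies $g(\sum_i \lambda_i v_i) \le \max_i g(v_i)$ and therefore attains its maximum at a vertex; since the vertices of the $y$-box are images of the vertices of the rating box under the coordinate-wise monotone map, the optimum is attained at a boundary point of the rating domain, and one may choose $\ratefn_{1:k}$ realizing those boundary ratings. Taking the expectation over ${\history}^\star_{i_2, t-k:}$ is then immediate, since conditional on the (observed) factual trajectory the optimization over $\ratefn_{1:k}$ reduces to the bounded-ratings problem just analyzed.

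I expect the main obstacle to be spotting the right reparametrization: it is the per-item decoupling of the matrix-factorization update that makes the exponentiated logits an independent box, hence the reachable distributions a normalized affine image, and it is precisely this that lets the $L_2$ objective be written as a norm-of-affine over positive-affine ratio with visibly convex sublevel sets. A secondary point requiring care is that quasi-convexity is cleanest in the reparametrized variable $y$ (rather than the raw ratings) when $k>1$, although the boundary/extreme-point conclusion is unaffected because the monotone reparametrization carries vertices to vertices.
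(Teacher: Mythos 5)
Your proposal is correct in substance but takes a genuinely different route from the paper's proof. The paper decomposes the (squared) $L_2$ distance coordinate-by-coordinate: it argues that each post-edit softmax probability $h_j$ is a convex function of the edited ratings, that $h_j - C_j$ therefore has a single root, invokes a lemma that the square of such a function is quasi-convex, and finally sums the per-item quasi-convex pieces. Your argument instead keeps the distance intact, observes that the entire post-intervention recommendation distribution is a normalized affine image $N(y)/\mathbf{1}^\top N(y)$ of the box of exponentiated logits, and reads quasi-convexity off the ratio $\|L(y)\|_2/m(y)$ via convexity of the sublevel sets $\{\|L(y)\|_2 - \gamma m(y)\le 0\}$, finishing with the vertex-to-vertex correspondence under the coordinate-wise monotone reparametrization. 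What your route buys is robustness: it does not rely on the two most fragile steps of the paper's argument (individual softmax components are not convex in general --- the two-item case is the sigmoid --- and quasi-convexity is not closed under sums), and it yields the extreme-point conclusion, which is the operationally relevant part of the proposition, by a standard fractional-programming observation.

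The one caveat, which you flag yourself, is that for $k>1$ your argument establishes quasi-convexity in the reparametrized variable $y$ rather than literally in the parameters of $\ratefn_{1:k}$: the preimage of a convex sublevel set under the coordinate-wise exponential map $\theta\mapsto y(\theta)$ need not be convex, so the first half of the proposition's claim does not transfer verbatim (it does for $k=1$, where monotone scalar reparametrization preserves quasi-convexity). This does not affect the boundary-point conclusion, since vertices of the rating box map to vertices of the $y$-box. You should also state explicitly the assumption, implicit in your decoupling step, that the $k$ edited items are distinct, so that each item embedding depends affinely on a single rating coordinate and the image of the rating box under $\theta\mapsto y(\theta)$ is itself a box.
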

\vsedit{We note that the definitions of the proposed metrics don’t require any specific assumptions on how the recommender is trained and we only consider one fixed embedding here for ease of operationalization, borrowing this approach from \citep{DBLP:journals/corr/abs-2101-04526, curmei2021quantifying, dean2020recommendations}}. In our experiment, we also consider user embedding $\mathbf{p}_i$ and item embedding $\mathbf{q}_j$ to be the output of an LSTM that takes in user and item history, respectively. In this case, if the auditor has the corresponding LSTM, they may obtain gradient of $\mathbf{p}_i$ and $\mathbf{q}_j$ with respect to user history.  
We provide more details on this in Section~\ref{sec:experiment}.

\subsection{Black-box Access}\label{sec:zeroth}

In certain cases, the auditor can only obtain the output of a recommender through querying the system with different inputs,  
but no direct access to the recommendation mechanism itself (thus no access to the required gradients). 
In this case, we leverage the traditional toolkit to estimate the gradient using finite difference methods (e.g., \citep{malladi2023fine}):
\begin{align*}
     &\nabla_{\theta} \mathbb{P}(\recommendationRV_{i, t+1} = \itemindex | \text{do} (\useractionRV_{i,t} =\ratefn_\theta(\recommendation_{i,t}))) \\
     \approx& \frac{1}{2\epsilon}|\mathbb{P}(\recommendationRV_{i, t+1} = \itemindex | \text{do} (\useractionRV_{i,t} =\ratefn_{\theta+\epsilon \mathbf{z}}(\recommendation_{i,t}))) - \mathbb{P}(\recommendationRV_{i, t+1} = \itemindex | \text{do} (\useractionRV_{i,t} =\ratefn_{\theta-\epsilon \mathbf{z}}(\recommendation_{i,t})))|,
\end{align*}
where $\theta \in \mathbb{R}^d$, $\epsilon >0$ is a perturbation scale, and $\mathbf{z} \sim \mathcal{N}(0, \mathbf{I}_{d})$.
For more details on using zeroth-order information to perform optimization for a neural network, we refer the readers to \citep{malladi2023fine}.

\section{Experiments}\label{sec:experiment}

\vsedit{We conduct experiments to show how our proposed metrics of reachability and stability vary for different recommender systems, across different time horizons and as the stochasticity of the recommender system changes. 
\llcomment{Leqi should change the next sentence.}
Since these metrics are meant to be proxies of user agency, we show how our the aforementioned experimental parameters affect user agency, by observing how these measurements change as we change these parameters.  
}

\subsection{Experimental Setup}

\paragraph{Dataset.}
Our experiments are based on the publicly available MovieLens-1M Dataset \citep{movielens} which contains 1,000,209 total ratings given by 6,040 users on 3,706 movies
(see Table \ref{tab:ml1m} in the Appendix for the dataset summary statistics).
The ratings are integer values between 1 and 5, both inclusive.

\paragraph{Recommender Systems.}
We perform experiments with a Matrix Factorization (MF) recommender system \citep{10.1109/MC.2009.263} from Surprise\citep{Hug2020} 
and a Recurrent Recommender Network \citep{10.1145/3018661.3018689}
(additional details about these recommender systems and implementation details on both reachability and stability are included in Appendix \ref{sec:expt_det}). 
For future-facing metrics, we use a \emph{deterministic} item selection
policy that always recommends the top-$1$ item.
For past-facing metrics, we use the \emph{stochastic} softmax policy in \eqref{eq:score}. 

\vsedit{\paragraph{Experimental Parameters.}
There are two main parameters for the metrics: 
\begin{itemize}[leftmargin=*, noitemsep, topsep=4pt]
   \item Time Horizon $t$: For our experiments, we show a comparison between $t=1$ and $t=5$ to illustrate the point that, with a longer time horizon, users are more likely to ‘reach’ an item and adversaries are more likely to manipulate their recommendations. We performed these experiments with a time horizon of $t=1,2,3,4,5$ but chose to show only t=1 vs t=5 to show the effect of user agency under short vs long time horizon. 
   \item Stochasticity ($\beta$):For our experiments, we show a comparison between $\beta=0.8$ and $\beta=5$ to illustrate the point that, with a longer time horizon, users are more likely to ‘reach’ an item and adversaries are more likely to manipulate their recommendations. We performed these experiments with other values of $\beta$ but chose to show only $\beta=0.8$ vs $\beta=5$ to show the effect of user agency under short vs long time horizon. 
\end{itemize}}

Additional plots with different experimental parameters can be found in the Appendix \ref{app:addn}.

\subsection{Comparison among metrics and algorithms}

\paragraph{Algorithm comparisons.} 
We demonstrate that zeroth order optimization techniques with Black Box Access as described in Section \ref{sec:compute-metrics} can serve as a viable optimization strategy. 
We use the ratio between the final probability of an item being recommended to a user after intervention and the initial probability of the item being recommended to the user before any intervention to measure the gain in reachability. 
This has been referred to as ``Lift'' in \citep{curmei2021quantifying}, \vsedit{while the initial probability of the item being recommended to the user before any intervention is termed  ``baseline reachability.''} 
We compare the mean values of Lift obtained for both past-$5$ and future-$5$ reachability and stability
using gradient descent with those obtained using black box access (see Figure \ref{fig:reach_mezo} for these comparisons). 
For reachability, gradient descent (GD) outperforms black box access, but black box access still obtains a solution with Lift $ > 1$. 
For stability, black box access converges to the same solution as GD.
For stability, we also plot the value obtained by
exhaustively searching the extreme points of the 
rating domain, represented by ``Oracle'' in \ref{fig:stab_mezo}.
\begin{figure}
\centering
\includegraphics[width=\textwidth]{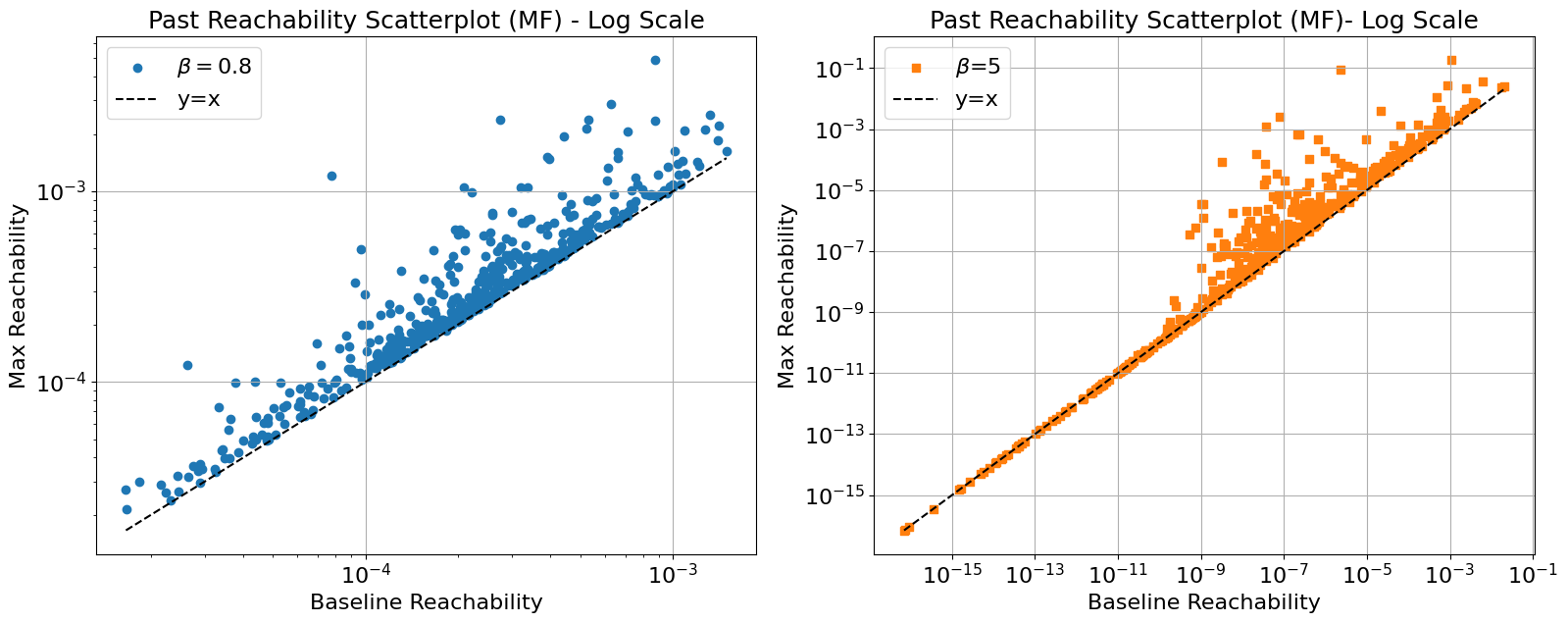}
\caption{Scatterplot of Past-5-Reachability for a Matrix Factorization based recommender as $\beta$ varies}
\label{fig:reach_stochasticity}
\end{figure}
\begin{figure}
\begin{subfigure}{0.45\textwidth}
\centering
\includegraphics[width=\textwidth]{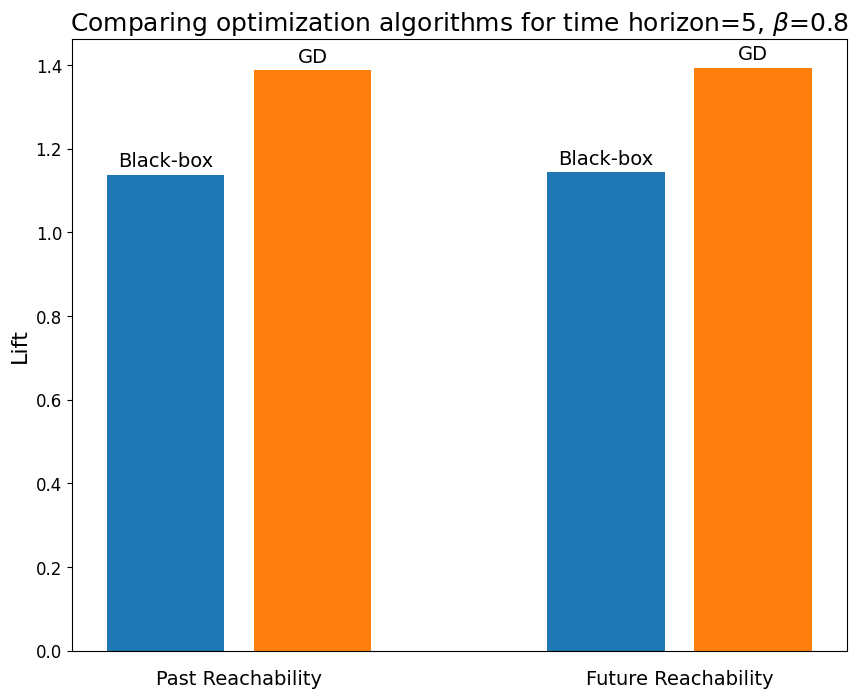}
\caption{Mean value of  reachability Lift}
\label{fig:reach_mezo}
\end{subfigure}
\hfill
\begin{subfigure}{0.45\textwidth}
\centering
\includegraphics[width=\textwidth]{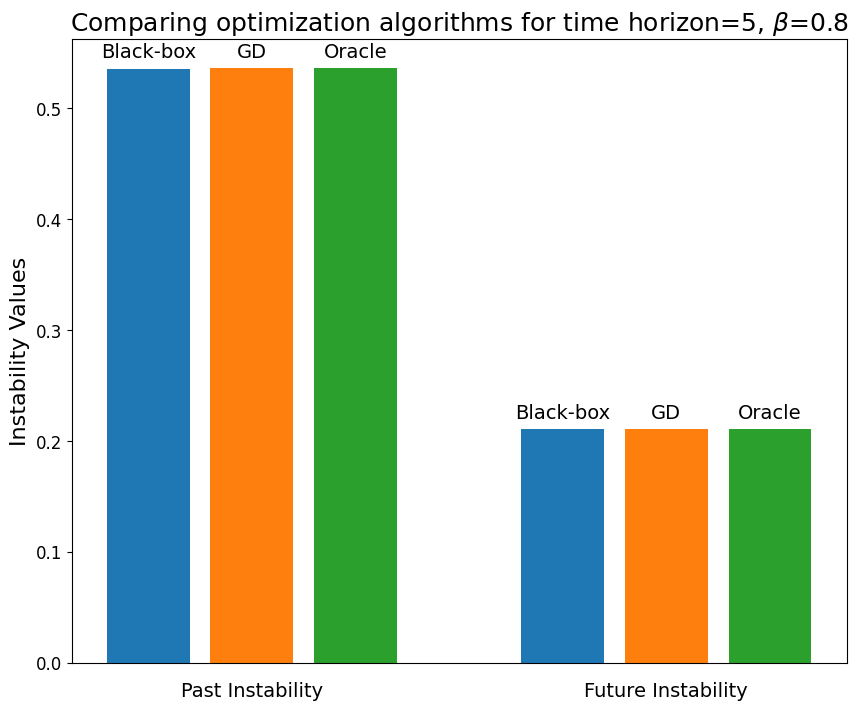}
\caption{Mean values of instability}
\label{fig:stab_mezo}
\end{subfigure}
\caption{\ref{fig:reach_mezo}, \ref{fig:stab_mezo} show how black box access compares to gradient descent (GD) when attempting to estimate past reachability (\ref{fig:reach_mezo}) or stability (\ref{fig:stab_mezo})}
\end{figure}

\paragraph{Metric comparisons.}
Allowing both past and future edits for longer time horizons lead to higher gains in reachability compared to shorter time horizons 
as the user has more freedom to change their ratings.  
Meanwhile, allowing edits over a longer time horizon decreases the stability of a user's recommendations on average. We observe that most user-adversary pairs have instability values close to 0 or 1, indicating a duality where a user's recommended list is either heavily affected by the actions of an adversary or is minimally affected by them.
Figure \ref{fig:time_horizon} shows these results.

\begin{figure}[h]
    \centering
    \begin{subfigure}{0.45\textwidth}
        \centering
        \includegraphics[width=\textwidth]{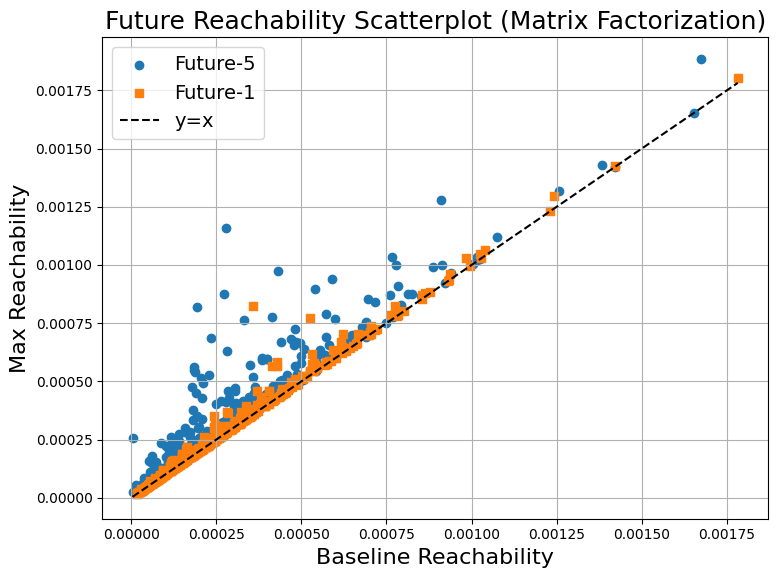}
        \caption{Scatterplot of Future Reachability for different time horizons (Matrix Factorization)}
        \label{fig:reach_time}
    \end{subfigure}
    \hfill
    \begin{subfigure}{0.45\textwidth}
        \centering
        \includegraphics[width=\textwidth]{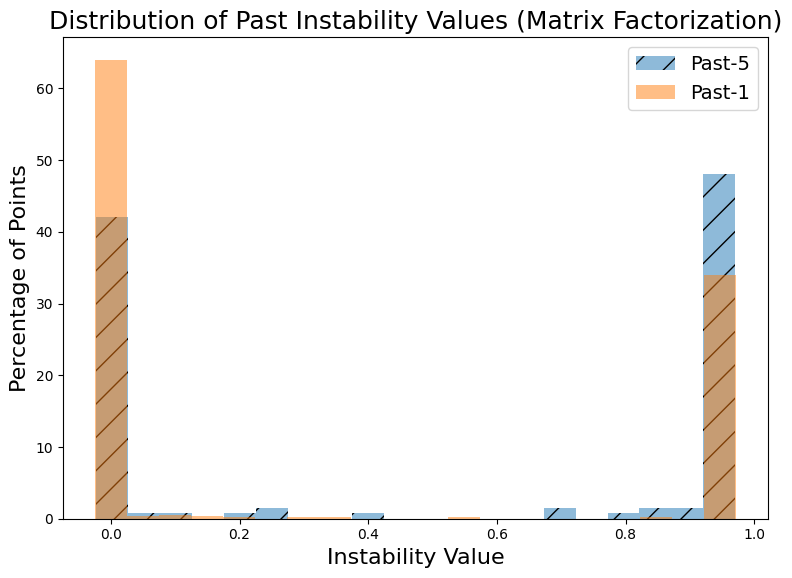}
        \caption{Histogram of Past Instability for different time horizons (Matrix Factorization)}
        \label{fig:stab_time}
    \end{subfigure}
    \caption{Here, we compare the values of Future Reachability and Past Instability for two different values of time horizon and show how reachability and instability increase with longer time horizons.}
    \label{fig:time_horizon}
\end{figure}
\subsection{Investigating the Design of Recommender Systems}

\paragraph{Stochasticity.}
A higher $\beta$ leads to less stochastic recommendations. 
The plot in Figure \ref{fig:reach_stochasticity} shows the past-$5$ reachability of multiple user-item pairs for $\beta=0.8$ and $\beta=5$. 
We observe higher values of Lift (gain in reachability) for lower stochasticity. 
Moreover, these values are higher for items with high baseline reachability values in both cases, as the points move away from the $y=x$ line for higher baseline reachability. For $\beta=5$, we observe multiple Lift values of the order of $10^3$ but for $\beta=0.8$, we only see lift values slightly greater than 1 for the most part, or of the order of $10$ at best.

As we increase $\beta$, or decrease the stochasticity of the system, user recommendations tend to become more stable, as shown in Figure \ref{fig:stab_beta} where we plot mean values of instability for multiple user-adversary pairs and varying $\beta$ from $0.2$ to $6$ in discrete steps. 
The intuition for this is that a lower stochasticity implies that the user is pushed more towards deterministic recommendations, and in the most extreme case, their recommendation list is perfectly stable, with the entire probability distribution of recommendation being concentrated on one item. 
We also plot histograms for instability values (Figure \ref{fig:stab_stochasticity}) at $\beta=0.8$ and $\beta=5$ and see that a greater percentage of items for $\beta=5$ have their instability values concentrated around 0 than do for $\beta=0.8$, which matches the prior observation.

\begin{figure}[h]
    \centering
    \begin{subfigure}{0.45\textwidth}
        \centering
        \includegraphics[width=\textwidth]{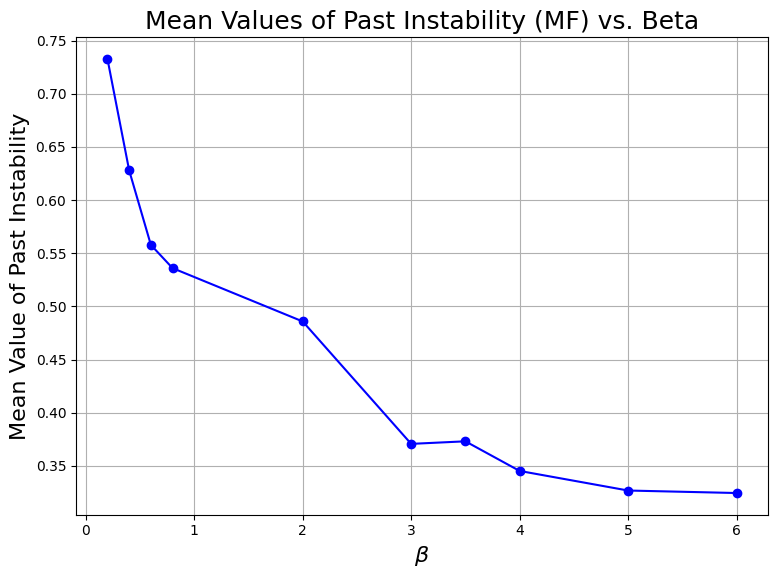}
        \caption{Past Stability values (MF) as $\beta$ varies}
        \label{fig:stab_beta}
    \end{subfigure}
    \hfill
    \begin{subfigure}{0.45\textwidth}
        \centering
        \includegraphics[width=1.12\textwidth]{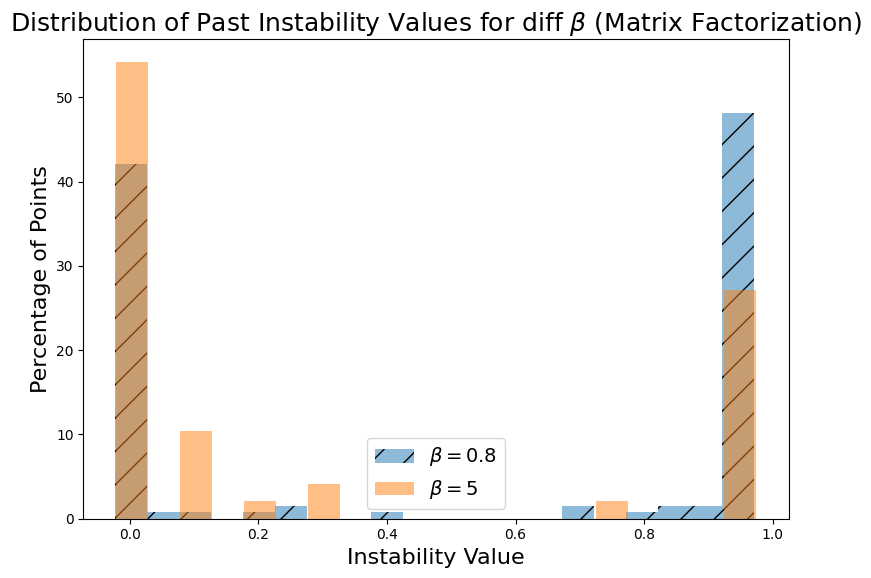}
        \caption{Past Instability (MF) for $\beta=0.8$ and $\beta=5$}
        \label{fig:stab_stochasticity}
    \end{subfigure}
    \caption{Effect of varying stochasticity on past-stability for a MF recommender system.}
    \label{fig:stochasticity_stability}
\end{figure}

\paragraph{Nature of Recommender Systems}
We observe that the Matrix Factorization based recommender system has higher reachability than the Recurrent Recommender Network when averaged over a large number of user-item pairs. On the other hand, the Recurrent Recommender Network consistently has higher stability than the Matrix Factorization based system when averaged over a large number of (user-user) pairs.
The differences in Figures \ref{fig:rec_comparison} and \ref{fig:time_horizon} illustrate this, as the reachability scale is lower than that for MF in \lledit{Figure} \ref{fig:reach_time} and there is no concentration of instability values around $1$ unlike in \lledit{Figure} \ref{fig:stab_time}. \vsedit{We infer that the Matrix Factorization based system facilitates more reachability but is less stable than it's Recurrent Recommender Network counterpart\lldelete{, indicating that the more complex deep learning based model may allow less user agency in this setting \llcomment{Not sure if we can conclude this.}}. }

\begin{figure}[h]
    \centering
    \begin{subfigure}{0.45\textwidth}
        \centering
        \includegraphics[width=\textwidth]{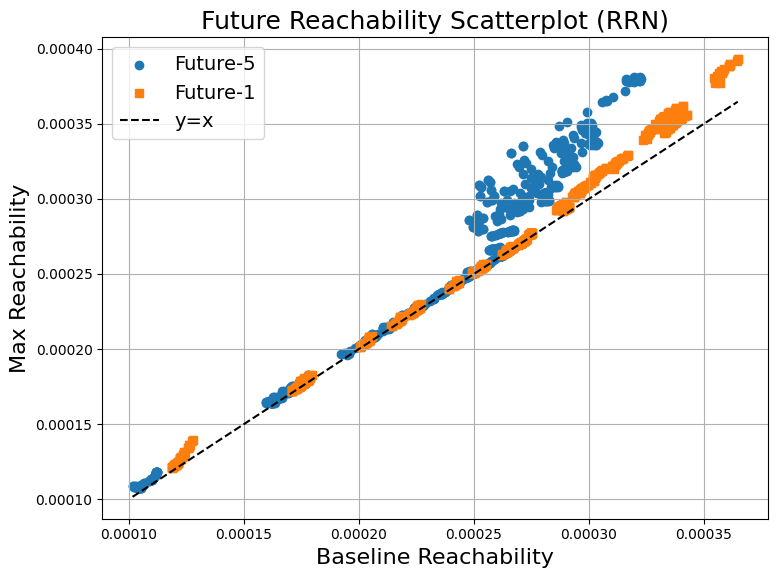}
        \caption{Scatterplot of Future Reachability for different time horizons for a recurrent recommender network}
        \label{fig:reach_time_rrn}
    \end{subfigure}
    \hfill
    \begin{subfigure}{0.45\textwidth}
        \centering
        \includegraphics[width=\textwidth]{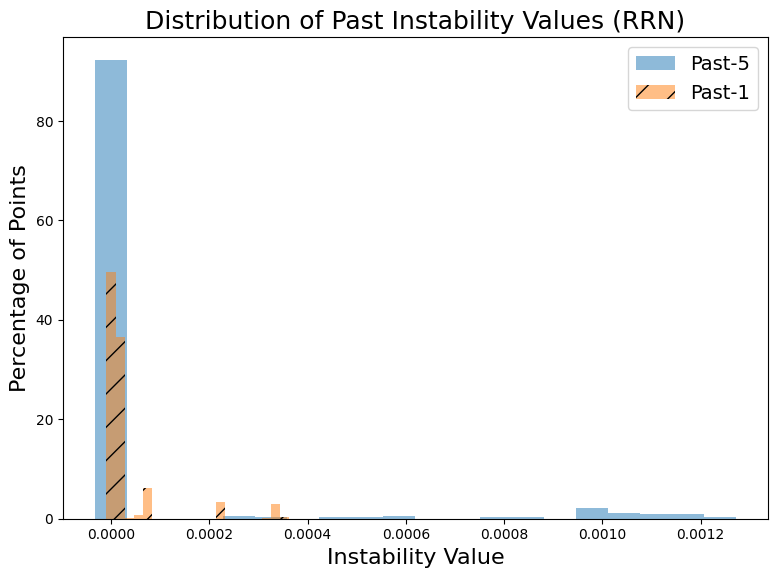}
        \caption{Histogram of Past Instability for different time horizons for a recurrent recommender network}
        \label{fig:stab_time_rrn}
    \end{subfigure}
    \caption{Effect of time-horizon on future reachability and past stability for a Recurrent Recommender Network. We compare these plots to \ref{fig:time_horizon} to compare the two recommender models.}
    \label{fig:rec_comparison}
\end{figure}

\section{Conclusion}

This paper presents a causal framework to formalize
interventional and counterfactual metrics to audit
and evaluate recommender systems in a principled manner.
Addressing the problem of user agency in recommender systems,
we present the metrics of \emph{reachability} 
and \emph{stability} to quantify a user's agency
over their recommendations under \lldelete{counterfactual} edits to their own or a different user's history. 
\vsedit{In our experiments, we explicitly investigated the implication of user agency defined by reachability and stability with respect to two design choices of recommender systems: stochasticity in the system and the recommendation algorithm. We find that with more stochasticity, users’ agency towards their own recommendation increases as indicated by the increase of reachability. However, higher stochasticity also allows other users’ to potentially have more control over the systems, indicated by lower stability of the system. Comparing recommendation algorithms, we observed that Matrix Factorization has higher reachability than the Recurrent Recommender Network, suggesting that more complex models may allow less user agency \lledit{in some aspects}. This could be attributed to the fact that small changes to a dataset have low influence on large-scale deep learning systems. An interesting future step is to \lledit{investigate the tradeoff between reachability and stability across systems, and} implement the proposed metrics in real-world systems, further confirming their validity in practice.}

\bibliographystyle{abbrvnat}
\bibliography{Bibliography}

\clearpage
\appendix
\section{Details for Section \ref{sec:compute-metrics}}

\subsection{Score-based Recommender Systems}\label{sec:score-based-recsys}

Given data of $T$ time steps, 
\begin{align}
   \min_{f \in \mathcal{F}, g \in \mathcal{G}} \sum_{t \in [T]} \sum_{i \in [n]} (\useraction_{i,t}[j] - f(\history_{i,t})^\top g(\itemhistory_{j,t}))^2 + \Omega(f,g),
\end{align}
where $\itemhistory_{j,t}$ is the rating history for item $j$ so far.
In matrix factorization,
the user representation $f(\history_{i,t}) = \mathbf{p}_i \in \mathbb{R}^d$ and item representation $g(\itemhistory_{j,t}) = \mathbf{p}_j \in \mathbb{R}^d$ are learnable constant vectors. 
Another setting could be that $f, g$ are both neural networks, e.g., LSTMs, that map user and item history to a $d$-dimensional vector.
In both cases, the regularizer $\Omega(f,g)$ are the norms of the user and item vectors.
Once a score $\hat{\score}_{i,j,t} = \hat{f}(\history_{i,t})^\top \hat{g}(\itemhistory_{j,t})$ is learned using the dataset $\dataset_t$, the recommendation probability $\mathbb{P}(\recommendationRV_{i,t}=j|\text{do}(\datasetRV_t = \dataset_t))$ for example can be a pointmass $\mathbb{I}\{j = \argmax_{j \in \itemset} \hat{\score}_{i,j,t}\}$, or follows a softmax policy $e^{\beta \hat{\score}_{i,j,t}}/(\sum_l e^{\beta \hat{s}_{i,l,t}})$ for $\beta \geq 0$ controlling how stochastic one wants the recommender to be.

\subsection{Proofs}\label{sec:compute-metrics-proof}

For the proofs, we consider a matrix factorization model with learned user factors given by $P$ $\in$ $\mathbb{R}^{n \times d}$ and learned item factors given by $Q \in \mathbb{R}^{m \times d}$.

\begin{proof}[Proof of Proposition~\ref{prop:past-recheability}]
For a fixed factual recommendation trajectory ${\history}^\star_{i, t-k:} = (\recommendation^\star_{i, t-k}, \useraction^\star_{i, t-k}, \cdots, \recommendation^\star_{i, t-1}, \useraction^\star_{i, t-1})$,
the optimization problem in \eqref{eq:past-reachability} is
\begin{align*}
    \max_{\useraction_{1:k} \in \mathbb{R}^k} \mathbb{P}(\recommendationRV_{i, t} = \itemindex | \recommendationRV_{i,t-k}^\star = \recommendation_{i,t-k}^\star, \text{do} (\useractionRV_{i,t-k} =\useraction_1), \ldots, \recommendationRV_{i,t-1}^\star = \recommendation_{i,t-1}^\star, \text{do} (\useractionRV_{i,t-1} = \useraction_k)),
\end{align*}
In this setting, the user $i$ modifies the ratings $\useraction_{1:k}$
for the items in the factual trajectory $\recommendation_{i,t-k:t-1}^\star$.
Since retraining the entire recommender system is not feasible after
every user interaction,
we make the following simplifying assumption for how the user vector
is updated after a rating $o_k$ is modified.

\textbf{Assumption}: 
After each rating user $i$ modifies, the user vector $P_i$ (the $i^\text{th}$ row of $P$) 
is updated but $Q$ is kept unchanged.
The objective of matrix factorization is to solve following expression, where $R$ is the user-item rating matrix:
\[
\min_{P,Q}\|PQ^T -R\|_2^2
\]
Under this assumption, the updated user vector after every interaction is given by:
\[P_i = \arg \min_{p'} \sum_{v \in V} (p'^T Q_v - R_{iv})^2\]
where $V$ is the subset of items the user $i$ has interacted with. Every interaction adds an additional element to $V$ and populates $R_{iv}$.
After $k$ timesteps(from $t-k$ to $t - 1$), this optimization problem has a simple closed-form solution given by:
\[
P_i = (Q_\text{rated}^T Q_\text{rated})^{-1} Q_\text{rated}^T R_\text{rated},
\]
where
\[
Q_{\text{rated}} =
\begin{bmatrix}
Q_{\recommendation_{i,1}^\star} \\
Q_{\recommendation_{i,2}^\star} \\
\vdots \\
Q_{\recommendation_{i,t-1}^\star}
\end{bmatrix}
,\quad
R_{\text{rated}} =
\begin{bmatrix}
R_{i,1} \\
\vdots \\
R_{i,t-k-1} \\
\useraction_1 \\
\useraction_3 \\
\vdots \\
\useraction_{k}
\end{bmatrix}.
\]

$Q_{\text{rated}} \in \mathbb{R}^{(t-1) \times d}$ is a matrix whose rows are the item vectors of the items rated by the user upto time $t$
and $R_{\text{updated}}$ is a column vector of size $t_0-1$ which represents all the corresponding ratings given to these items by the user $i$ up to time $t$ with the last $k$ ratings being replaced by $\useraction_1, \useraction_2, \dots \useraction_{k}$ as discussed above.

We assume a stochastic $\beta$-softmax selection rule given by:
\[
\mathbb{P}(\recommendationRV_{i,t_0} = j) = \frac{e^{\beta s_{ij}}}{\sum_{k \in V} e^{\beta s_{ik}}}
\]
where $s_{ij} = P_i^T Q_j$ is the predicted rating for the interaction between user $i$ and item $j$.
In this case,
\[
\mathbb{P}(\recommendationRV_{i, t} = \itemindex | \recommendationRV_{i,t-k}^\star = \recommendation_{i,t-k}^\star, \text{do} (\useractionRV_{i,t-k} =\useraction_1), \ldots, \recommendationRV_{i,t-1}^\star = \recommendation_{i,t-1}^\star, \text{do} (\useractionRV_{i,t-1} = \useraction_k)) = \frac{e^{\beta P_i^T Q_j}}{\sum_{k \in V} e^{\beta P_i^T Q_k}}
\]
Maximizing this is equivalent to maximizing the logarithm of this quantity, given by:
\[
\beta P_i^T Q_j - \underset{\text{k $\in$ $V$}}{\text{LSE}}(\beta P_i^T Q_k)
\]
where LSE denotes the log-sum-exponential.
We can rewrite the objective as:
\[
 \max_{\useraction_{1, \ldots k} \in \mathbb{R}^k} \beta P_i^T Q_j - \underset{\text{k $\in$ $V$}}{\text{LSE}}(\beta P_i^T Q_k) =  \min_{\useraction_{1, \ldots k} \in \mathbb{R}^k} - \beta P_i^T Q_j +\underset{\text{k $\in$ $V$}}{\text{LSE}}(\beta P_i^T Q_k)
\]
We can see that $P_i^TQ_j$ is a linear function in all $\useraction_1, \dots \useraction_k$ since every $Q_j$ is independent of $\useraction_1, \useraction_2 \dots \useraction_k$.

Let $\useraction$ = $[\useraction_1, \dots \useraction_{k}]^T$. Then, for some \textbf{$b_k$} $\in$ $\mathbb{R}^t$ and $c_k \in \mathbb{R} \forall k \in V$, the objective becomes:
\[
 \min_{\useraction_{1, \ldots k} \in \mathbb{R}^k} \underset{\text{k $\in$ $V$}}{\text{LSE}} \left( \beta(b_k^T \useraction + c_k) \right) - \beta(b_j^T \useraction + c_j)
\]
This is convex because log-sum-exp is a convex function, affine functions are convex, and the composition of a convex and an affine function is convex. 
\end{proof}

\begin{lemma}\label{lemma:apdx-quasi-convex}
Let $f: \mathbb{R}^n \rightarrow \mathbb{R}$ be a convex function with only one root, i.e., $f(\mathbf{x}_0) = 0$ for some $\mathbf{x}_0 \in \mathbb{R}^n$. Then, $g(\mathbf{x}) = [f(\mathbf{x})]^2$ is a quasi-convex function.
\textbf{Proof:}
To prove that $g(\mathbf{x})$ is quasi-convex, we need to show that for any $\alpha \in \mathbb{R}$, the sublevel set $S_\alpha = \{\mathbf{x} \in \mathbb{R}^n | g(\mathbf{x}) \leq \alpha\}$ is a convex set.
\begin{enumerate}
\item If $\alpha < 0$, the sublevel set $S_\alpha$ is empty because $[f(\mathbf{x})]^2 \geq 0$ for all $\mathbf{x} \in \mathbb{R}^n$. An empty set is convex by definition.
\item If $\alpha = 0$, the sublevel set $S_0 = \{\mathbf{x}_0\}$ because $f(\mathbf{x}_0) = 0$ and $[f(\mathbf{x})]^2 > 0$ for all $\mathbf{x} \neq \mathbf{x}_0$. A singleton set is convex.

\item If $\alpha > 0$, consider the sublevel set $S_\alpha = \{\mathbf{x} \in \mathbb{R}^n | [f(\mathbf{x})]^2 \leq \alpha\}$. We can rewrite this as:
\[
S_\alpha = \{\mathbf{x} \in \mathbb{R}^n | -\sqrt{\alpha} \leq f(\mathbf{x}) \leq \sqrt{\alpha}\},
\]
which is a convex set for any convex $f$.
\end{enumerate}
\end{lemma}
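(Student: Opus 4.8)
The plan is to verify quasi-convexity of $g$ directly through its sublevel sets $S_\alpha = \{\mathbf{x} \in \mathbb{R}^n : g(\mathbf{x}) \le \alpha\}$, since $g$ is quasi-convex if and only if $S_\alpha$ is convex for every $\alpha \in \mathbb{R}$. The cases $\alpha < 0$ (where $S_\alpha = \emptyset$) and $\alpha = 0$ (where $S_\alpha = \{\mathbf{x}_0\}$, using that $\mathbf{x}_0$ is the \emph{unique} root) are immediate, so the entire content lies in showing that $S_\alpha = \{\mathbf{x} : -\sqrt{\alpha} \le f(\mathbf{x}) \le \sqrt{\alpha}\}$ is convex for $\alpha > 0$.

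The upper constraint $\{f \le \sqrt{\alpha}\}$ is a sublevel set of the convex function $f$ and is therefore convex; moreover, along any segment whose endpoints satisfy $f \le \sqrt{\alpha}$, convexity keeps $f$ below $\sqrt{\alpha}$, so the upper bound can never be the reason $S_\alpha$ fails to be convex. The only way convexity of $S_\alpha$ could break is through the \emph{lower} constraint $\{f \ge -\sqrt{\alpha}\}$, which is a \emph{superlevel} set of $f$ and hence not convex in general. This is precisely the step I expect to be the main obstacle: a convex function can dip below $-\sqrt{\alpha}$ in a bounded ``valley'' while staying above $-\sqrt{\alpha}$ on either side, and such a valley would produce two points of $S_\alpha$ whose connecting segment leaves $S_\alpha$. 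So the claim that the intersection is convex is not automatic and must be earned from the hypothesis.

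The key step, then, is to argue that the single-root assumption rules out exactly this configuration. I would establish the structural claim that a convex $f$ with a unique root $\mathbf{x}_0$ is necessarily nonnegative, so that $\mathbf{x}_0$ is its global minimizer. The cleanest route is to consider the open convex set $C = \{f < 0\}$: by continuity every boundary point of $C$ satisfies $f = 0$, so $\partial C \subseteq \{\mathbf{x}_0\}$. When $n \ge 2$, a nonempty proper open convex set cannot have a single-point boundary, since otherwise $\mathbb{R}^n \setminus \{\mathbf{x}_0\}$ (which is connected) would split into the two disjoint nonempty open sets $C$ and $\mathbb{R}^n \setminus \overline{C}$, and the only remaining alternative $C = \mathbb{R}^n \setminus \{\mathbf{x}_0\}$ is not convex. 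Hence $C = \emptyset$ and $f \ge 0$; then for $\alpha > 0$ the lower constraint is vacuous, $S_\alpha = \{f \le \sqrt{\alpha}\}$, and convexity follows as above.

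Finally I would dispatch the one remaining degenerate possibility, $n = 1$, where $C = \{f < 0\}$ can be a genuine half-line rather than empty. In that case convexity together with a single root forces $f$ to be monotone, because a convex function that is negative on an entire half-line and positive beyond its root cannot have an interior minimizer; consequently $\{f \ge -\sqrt{\alpha}\}$ is itself a half-line and $S_\alpha$ is an intersection of two intervals, hence convex. An equivalent and perhaps more uniform way to present all of this is to write $g$ as the composition of $t \mapsto t^2$ with $f$ and observe that quasi-convexity only requires preimages of the intervals $[-\sqrt{\alpha},\sqrt{\alpha}]$ under $f$ to be convex; the unique-root hypothesis is exactly what guarantees that $f$ either avoids $(-\infty,0)$ altogether or approaches its infimum monotonically, which in both cases keeps those preimages convex.
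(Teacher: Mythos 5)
Your proposal follows the same skeleton as the paper's proof (case analysis on the sublevel sets $S_\alpha$ for $\alpha<0$, $\alpha=0$, $\alpha>0$), but in the only nontrivial case, $\alpha>0$, you supply an argument that the paper omits---and the paper's own justification there is in fact wrong as written. The paper asserts that $S_\alpha=\{\mathbf{x}: -\sqrt{\alpha}\le f(\mathbf{x})\le\sqrt{\alpha}\}$ ``is a convex set for any convex $f$,'' which is false: for $f(x)=x^2-10$ and $\alpha=1$ the set is $\{9\le x^2\le 11\}$, a union of two disjoint intervals. The lower constraint is a superlevel set of a convex function and is not convex in general; the unique-root hypothesis is exactly what rescues it, and the paper never invokes that hypothesis in case 3. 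You correctly isolate this as the crux and close the gap: the strict sublevel set $C=\{f<0\}$ is open and convex with $\partial C\subseteq\{\mathbf{x}_0\}$, and for $n\ge 2$ a topological argument forces $C=\emptyset$, so $f\ge 0$ and the lower constraint is vacuous; for $n=1$ the unique root forces $f$ to be monotone through its negative region, so $\{f\ge-\sqrt{\alpha}\}$ is a half-line. Both branches are sound (for $n=1$ one can argue equally well that if $\{f<-\sqrt{\alpha}\}$ were a bounded interval with $f$ dipping strictly below $-\sqrt{\alpha}$ inside it, convexity would force $f\to+\infty$ on both sides and hence produce two roots). So your proof is correct and strictly more careful than the paper's; the extra work you do is not optional but necessary for the lemma to hold.
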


\begin{proof}[Proof of Proposition \ref{prop:past-stability}]
For a fixed factual recommendation trajectory
${\history}^\star_{i_2, t-k:} = (\recommendation^\star_{i_2, t-k}, \useraction^\star_{i_2, t-k}, \cdots, \\ \recommendation^\star_{i_2, t-1}, \useraction^\star_{i_2, t-1})$,
the optimization problem in \eqref{eq:past-stability} can be written as:
\[
\max_{\useraction_{1, \ldots k} \in \mathbb{R}^k} d(\mathbb{P}(\recommendationRV_{i_1, t} | \recommendationRV_{i_2,t-k}^\star = \recommendation_{i_2,t-k}^\star, \text{do} (\useractionRV_{i_2,t-k} =\useraction_1), \ldots, \recommendationRV_{i_2,t-1}^\star = \recommendation_{i_2,t-1}^\star, \text{do} (\useractionRV_{i_2,t-1} =\useraction_k)), \mathbb{P}(\recommendationRV_{i_1,t}))
\]

In this setting, the user $i_2$ modifies the ratings 
of the factual item trajectory $\recommendation_{i_2,t-k:t-k+1}^\star$. 
Since retraining the entire recommender system is not feasible after
every user interaction,
we make the following simplifying assumption for how the item vector
is updated after a rating $o_k$ is modified.

\textbf{Assumption}: After $i_2$ rates an item $j$, the item vector $Q_j$(the $j^\text{th}$ row of $Q$) is updated but $P$ is kept unchanged.
Similar to the proof past-$k$ reachability, the updated item vector for an item $j$ rated by $i_2$ is given by:
\[
Q_j = (P_\text{rated}^T P_\text{rated})^{-1} P_\text{rated}^T R_\text{rated},
\]
where
\[
P_{\text{rated}} = 
\begin{bmatrix}
P_{\prioruser^*_{j,1}} \\
\vdots \\
P_{i_2} \\
\vdots \\
P_{\prioruser^*_{j,t-1}}
\end{bmatrix} ,\quad
R_{\text{rated}} =
\begin{bmatrix}
R_{j,1} \\
\vdots \\
\useraction_{t} \\
\vdots \\
R_{j,t-1}
\end{bmatrix}.
\]
Here, $P_\text{rated}$ is a matrix $\in \mathbb{R}^{(t-1) \times d}$ whose rows are the user vectors of the users that rated the item $j$ up to $t$, that is represented by $\prioruser^*_{j,t}$;
with user $i_2$ rating it at $t-k+t-1$ and $R_\text{rated}$ is a column vector of size $t-1$ that represents all the corresponding ratings given by these users to $j$, with the rating at time $t+k'-1$ being replaced with $\useraction_{k'+k}$.

We assume a stochastic $\beta$-softmax selection rule given by:
\[
\mathbb{P}(\recommendationRV_{i,t_0} = j) = \frac{e^{\beta s_{ij}}}{\sum_{k \in V} e^{\beta s_{ik}}}
\]
where $s_{ij} = P_i^T Q_j$ is the predicted rating for the interaction between user $i$ and item $j$.

The $L_2$ distance metric is defined as the defined as the $L_2$ distance between the discrete probability probability distributions of $i_1$'s next recommended item before and after $i_2$'s modifies their ratings.
\begin{multline*}
f(\useraction) = \sum_{j \in V} \Bigg( 
    \mathbb{P}\Big( 
        \recommendationRV_{i_1, t_0} = j \mid 
        \recommendationRV_{i_2,t_0-k}^\star = \recommendation_{i_2,t_0-k}^\star,
        \text{do}\left( \useractionRV_{i_2,t_0-k} = \useraction_1 \right), \ldots, 
        \recommendationRV_{i_2,t_0-1}^\star = \recommendation_{i_2,t_0-1}^\star, \\
        \text{do}\left( \useractionRV_{i_2,t_0-1} = \useraction_k \right) 
    \Big) 
    - \mathbb{P}\left( \recommendationRV_{i_1,t_0} = j \right) 
\Bigg)^2
\end{multline*}

We note that the predicted rating of a user-item interaction only changes when the item is one of the items for which the user $i_2$ modifies their ratings. In this case, we can see that the predicted rating given by $i_1$ on an item $i_2$ rates $\useraction_k$ is a linear function in $\useraction_k$. Our objective can be written as $f(\useraction) = \sum_{j \in V} g_{j}(\useraction)$, where
\[
g_{j}(\useraction) = \left(\frac{e^{\beta(c_{i_1j}\useraction_k + d_{i_1j})}}{\sum_{k \in V} e^{\beta(c_{i_1k}\useraction_k + d_{i_1k})}} - \mathbb{P} (\recommendationRV_{i_1,t_0} = j )\right)^2
\]
We can rewrite this as:
\[
g_{j}(r) = \left(\frac{e^{\beta(c_{i_1j}\useraction_k + d_{i_1j})}}{\sum_{k \in V} e^{\beta(c_{i_1k}\useraction_k + d_{i_1k})}} - C_{j}\right)^2
\]
where $C_{j} = \mathbb{P}\left( \recommendationRV_{i_1,t_0} = j \right)$ is a constant.
Now, let's consider the function:
\[
h_{j}(\useraction) = \frac{e^{\beta(c_{i_1j}\useraction_k + d_{i_1j})}}{\sum_{k \in V} e^{\beta(c_{i_1k}\useraction_k + d_{i_1k})}}
\]
This function is the softmax function, which is known to be convex (Boyd and Vandenberghe, 2004).
Next, consider the function:
\[
l_{j}(\useraction) = h_{j}(\useraction) - C_{j}
\]
The difference of a convex function and a constant is also convex (Boyd and Vandenberghe, 2004). Therefore, $l_{j}(\useraction)$ is convex.
Finally, let's look at:
\[
g_{j}(\useraction) = (l_{j}(\useraction))^2
\]
Each $l_j$ is a monotonic convex function with exactly one root. 
By Lemma~\ref{lemma:apdx-quasi-convex}, each $g_{j}(\useraction)$
is quasiconvex.
Now, we can write the stability objective function as:
\[
f(\useraction) = \sum_{j \in V} g_{j}(\useraction)
\]
The sum of quasiconvex functions is also quasiconvexconvex (Boyd and Vandenberghe, 2004). 
Since $f(r)$ is quasiconvex, in practice, 
if we optimize each $o_k$ in the interval $[a, b] \subset \mathbb{R}$, 
the maximum value of $f(r)$ is attained at an extreme point of $[a, b]$ (Boyd and Vandenberghe, 2004). 
In other words, the stability objective is quasiconvex and achieves its optimal value at a boundary point of the domain.
\end{proof}

\section{Algorithm Details} \label{app:algo}

\begin{itemize}
    \item \textbf{Past Reachability:} Details provided in \ref{algo_reach_past}. We use a model trained up to time $t_0-k$, and then aim to optimize the user's ratings for the factual next $k$ items with the objective of maximizing the probability of recommending to item to be reached after $k$ steps, with a stochastic recommendation policy given by:
\[
\mathbb{P}(\recommendationRV_{i,t_0} = j) = \frac{e^{\beta s_{ij}}}{\sum_{k \in V} e^{\beta s_{ik}}}
\]
where $s_{ij}$ is the predicted rating for the interaction between user $i$ and item $j$.
    This is a stochastic selection rule controlled by a parameter of stochasticity $\beta$ to select items after the recommender system ranks them. There are $k$ parameters of optimization, where $k$ is the time horizon, corresponding to the user's action at each time step.
    \item \textbf{Future Reachability:} Details provided in \ref{algo_reach_future}. We use a model trained up to time $t_0$ and the user follows the recommendations given by a deterministic recommender system that returns the top-1 item that the user hasn't already rated based on predicted score, with the same objective mentioned above. There are $k \times |V|$ parameters of optimization here, where $|V|$ is the size of the item vocabulary. The ($k \cdot m + t$)th parameter represents the user's action if they were to be recommended item $m$ at time $t_0+t$.
    \item \textbf{Past Stability:} Details provided in \ref{algo_stab_past}. We use a model trained up to time $t_0-k$, and then aim to optimize the adversary's ratings for the factual next $k$ items with the objective of maximally changing the user's recommended list, with the same stochastic recommendation policy mentioned above. We use Hellinger Distance as our distance metric. There are $k$ parameters of optimization, where $k$ is the time horizon, corresponding to the adversary's action at each time step.
    \item \textbf{Future Stability:} Details provided in \ref{algo_stab_future}. We use a model trained up to time $t_0$ and the adversary follows the recommendations given by a deterministic recommender system that returns the top-1 item that the user hasn't already rated based on predicted score, with the same objective mentioned above. There are $k \times |V|$ parameters of optimization here, where $|V|$ is the size of the item vocabulary. The ($k \cdot m+t$)th parameter represents the adversary's action if they were to be recommended item $m$ at time $t_0+t$.
\end{itemize}

Note: We use a deterministic selection rule for computing future based metrics because it does not suffer from too much variation across epochs, unlike the stochastic selection rule, which would require an even larger parameter space to account for every possible item sequence.

\begin{algorithm}
    \caption{past-$k$ reachability}\label{algo_reach_past}
    \DontPrintSemicolon
    \KwIn{User $i$, Item to be reached $j$, Time Horizon $k$}
    \KwOut{Optimal ratings for history items}
    initialize chosen ratings $\useraction$ for history items as their factual ratings;\
    \For{$\text{epoch} \gets 1$ \KwTo $n_{\text{epochs}}$}{
    $\useraction$ clamped between [1,5]\;
        \For{$\text{timestep} \gets 1$ \KwTo $k$}{
            next item  $m$ $\gets$ historical item at $t_0-t+timestep$\;
            Update user vector based on (item, rating) pair ($m$,$\useraction_{\text{timestep}}$) \;
        }
        Compute $\mathbb{P}(\recommendationRV_{i,t_0} =j)$\;
        Backpropagate to chosen ratings $\useraction$\;
    }
    \end{algorithm}

\begin{algorithm}
    \caption{future-$k$ reachability}\label{algo_reach_future}
    \DontPrintSemicolon
    \KwIn{User $i$, Item to be reached $j$, Time Horizon $k$}
    \KwOut{Optimal ratings for future items}
    initialize parameter space $R$ of size $k \cdot |V|$ randomly;\
    \For{$\text{epoch} \gets 1$ \KwTo $n_{\text{epochs}}$}{
    $\useraction$ clamped between [1,5]\;
    initialize reachability$\_$vals of size num$\_$samples\;
    \For{$\text{avg} \gets 1$ \KwTo $n_{\text{num$\_$samples}}$}{
        \For{$\text{timestep} \gets 1$ \KwTo $k$}{
            next item $m$ $\gets$ top-1 item in recommended list\;
            Update user vector based on (item, rating) pair ($m$, $R_{k \cdot m+timestep}$)\;
        }
       reachability$\_$vals$_{avg}$ $\gets$ $\mathbb{P}(\recommendationRV_{i,t_0} =j)$\;
        }
        Compute mean of reachability$\_$vals\;
        Backpropagate to parameter space $R$\;
    }
    \end{algorithm}

\begin{algorithm}
    \caption{past-$k$ stability}\label{algo_stab_past}
    \DontPrintSemicolon
    \KwIn{User $i_1$, Adversary $i_2$, Time Horizon $k$}
    \KwOut{Optimal ratings for history items}
   initialize chosen ratings $\useraction$ for history items as their factual ratings;\;
    $i_1$'s initial preferences $l_1$ $\gets$ \emph{recsys}($i_1$, item vectors)\;
    \For{$\text{epoch} \gets 1$ \KwTo $n_{\text{epochs}}$}{
    $\useraction$ clamped between [1,5]\;
        \For{$\text{timestep} \gets 1$ \KwTo $k$}{
        next item  $m$ $\gets$ historical item at $t_0-t+timestep$\;
            Update item vector for $m$ based on (user, rating) pair ($i_2$,$\useraction_{\text{timestep}}$)\;
        }
        $i_1$'s final preferences $l_2$ $\gets$ \emph{recsys}($i_1$, item vectors)\;
        Compute distance $d(l_1,l_2)$\;
        Backpropagate this to chosen ratings $\useraction$\;
    }
    \end{algorithm}

\begin{algorithm}
    \caption{future-$k$ stability}\label{algo_stab_future}
    \DontPrintSemicolon
    \KwIn{User $i_1$, Adversary $i_2$, Time Horizon $k$}
    \KwOut{Optimal ratings for future items}
    initialize parameter space $R$ of size $k \cdot |V|$ randomly\;
    $i_1$'s initial preferences $l_1$ $\gets$ \emph{recsys}($i_1$, item vectors)\;
    \For{$\text{epoch} \gets 1$ \KwTo $n_{\text{epochs}}$}{
    $\useraction$ clamped between [1,5]\;
    initialize stability$\_$vals of size num$\_$samples\;
    \For{$\text{avg} \gets 1$ \KwTo $n_{\text{num$\_$samples}}$}{
        \For{$\text{timestep} \gets 1$ \KwTo $t$}{
            next item $m$ $\gets$ top-1 item in recommended list\;
            Update item vector for $m$ based on (user, rating) pair ($i_2$,$R_{k \cdot m+timestep}$)\;
        }
        $i_1$'s final preferences $l_2$ $\gets$ \emph{recsys}(user vector, item vectors)\;
        stability$\_$vals$_{avg}$ $\gets$ $d(l_1,l_2)$, d=distance metric\;
        }
        Compute mean of stability$\_$vals\;
        Backpropagate this to parameter space $R$\;
    }
    \end{algorithm}

\section{Experimental Details}
\label{sec:expt_det}
\paragraph{Dataset Summary Statistics}
The table below shows some dataset summary statistics.
\begin{table}[h!]
    \centering
    \begin{tabular}{l c}
        \toprule
        \textbf{Data set} & \textbf{ML 1M} \\
        \midrule
        \textbf{Users} & 6040 \\
        \textbf{Items} & 3706 \\
        \textbf{Ratings} & 1000209 \\
        \textbf{Density (\%)} & 4.47\% \\
        \bottomrule
    \end{tabular}
    \caption{Statistics and performance metrics for the ML 1M dataset.}
    \label{tab:ml1m}
\end{table}

\paragraph{Additional Recommender System Details:}
We choose $100$ as the size of both user and item latent vectors for matrix factorization.

For the Recurrent Recommender Network, we train two LSTMs in parallel: one that takes a user's item history as input and outputs a user vector and another that takes an item's user history as input and outputs an item vector. 

The user's item history is a list of the form [($\text{item$\_$id}_k$, $\text{rating}_k$)] where $k$ varies from $1$ to $n$, if $n$ is the number of interactions the user has had upto this point.

Each LSTM has input size $2$ and hidden size $100$, and we choose the last hidden state as the user/item vector.

Similar to general factorization based recommenders, the rating for a particular interaction is given by the dot product of the user and item vector for the (user,item) pair involved in the interaction.

For training, we first sort every interaction in the dataset by timestep and attempt to predict the rating for the next interaction based on interactions that have already taken place.

We train three versions of each recommender system:
one with complete data, one with the last item for each user left out and one with the last 5
items rated by each user left out in order to run our experiments.

\textbf{Updating the recommender system after every interaction:} We acknowledge that re-training the recommender system after every new user-item interaction is not feasible and is not done in practice in commercial systems\citep{DBLP:journals/corr/abs-2101-04526}. Both matrix factorization and recurrent recommender networks are inherently factorization based and create latent user and item representations that are updated whenever the model retrains. Instead of retraining the model, for computing reachability based metrics, we operate under the assumption that the item latent vectors remain fixed while we update the user latent vectors by solving a closed form equation given in \ref{eq:userupdate} for matrix factorization. Similarly, for computing stability based metrics, we operate under the assumption that the user latent vectors remain fixed while we update the item latent vectors by solving a closed form equation given in \ref{eq:itemupdate} for matrix factorization. We borrow this approach from \citep{DBLP:journals/corr/abs-2101-04526}.
\begin{equation}
\label{eq:userupdate}
 p=\underset{p^{\prime}}{\operatorname{argmin}} \sum_{v \in \mathcal{V}_{\text {seen }}}\left(p'^T q_v-r_v\right)^2
\end{equation}
\begin{equation}
\label{eq:itemupdate}
 q=\underset{q^{\prime}}{\operatorname{argmin}} \sum_{u \in \mathcal{U}_{\text {seen }}}\left(p_u^T q'-r_u\right)^2
\end{equation}
Here, $p$ denotes the user latent vector and $q$ denotes the item latent vector. $\mathcal{V}_{\text {seen }}$ is the list of items the user has rated, $r_v$ is the rating given to item $v$. Similarly, $\mathcal{U}_{\text {seen }}$ is the list of users the item has been rated by and $r_v$ is the rating given by user $u$.

For the recurrent recommender network, this is even more straightforward as we leave the LSTM as is and simply query it with the edited user/item history to obtain new user/item vectors

\paragraph{Settings for plots:}
Regarding the plots in \ref{sec:experiment},
\begin{itemize}
    \item For reachability based metrics, we randomly choose 10$\%$ of users as set $A$ and 10$\%$ of items as set $B$ and compute the reachability of every (user,item) pair in $A \times B$.
    \item For stability based metrics, we randomly choose 20$\%$ of users and divide them equally among set $A$ and set $B$ to serve as the primary user set and adversary set respectively. We then compute the stability of every combination of (user,adversary) pair in $A \times B$.
\end{itemize}

\paragraph{Compute Requirements}
We make use of 40G A6000 GPUs.

\section{Metrics for Aggregated Groups}
As an additional experiment, we group items by their popularity and attempt to see if more popular items are more reachable than less popular ones.

For this, we use the number of interactions an item is a part of in the training set as a proxy for it's popularity among users. We consider two groups of items:
\begin{itemize}
    \item Set A consists of the 30 most popular items.
    \item Set B consists of items with intermediate popularity (30 items between 200th to 300th in popularity).
\end{itemize}
We then randomly choose 10$\%$ of the total users as set C and compute the future reachability for every (user,item) pair in $C \times A$ and compare it to the future reachability of every (user,item) pair in $C \times B$.

    \begin{figure}[h]
        \centering
        \includegraphics[width=0.6\textwidth]{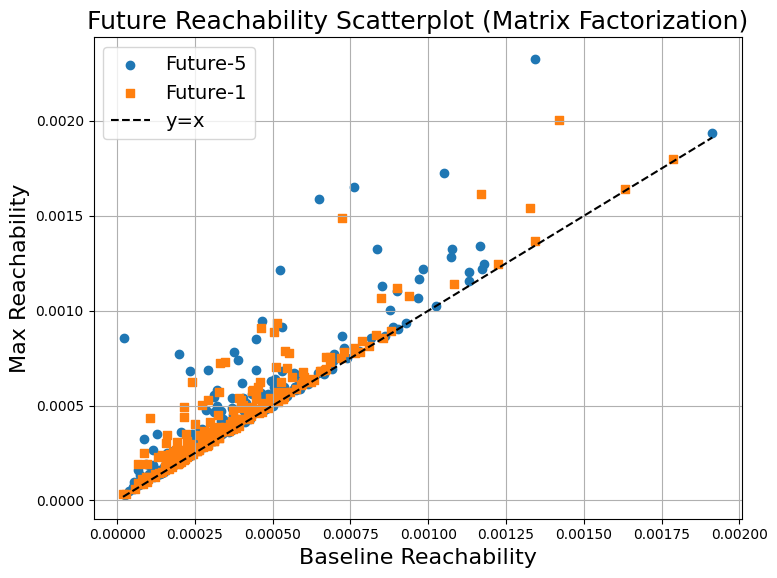}
        \caption{Future Reachability(MF) for popular items vs items with intermediate popularity}
        \label{fig:agg_reach}
    \end{figure}

\begin{table}[htbp]
\centering
\begin{tabular}{|l|c|c|}
\hline
Item Type & Popular item & Intermediate Item \\
\hline
Mean Lift & 1.3735 & 1.1846\\
\hline
Lower Confidence Interval & 1.0500 & 1.1361 \\
\hline
Upper Confidence Interval & 1.6971 & 1.2330 \\
\hline
\end{tabular}
\caption{Future Reachability with Popular vs Intermediate Items}
\label{table:agg_reach}
\end{table}
\textbf{Observations:}
The baseline values for future reachability for popular and intermediate items were observed to be similar on average. 
 The results in Table \ref{table:agg_reach} then show us that popular items have higher values of max future reachability on average as compared to items with intermediate popularity. We note that this difference is not as prominent when measuring past reachability instead of future reachability.

We also group users by their activity and attempt to see if active users act as better adversaries on average than inactive users, i.e., whether active users can cause a larger change in a random user's recommendation list than inactive users.

Using the number of ratings given by users as a proxy for their activity, we consider two groups of users:
\begin{itemize}
    \item Set A consists of the 30 most active users.
    \item Set B consists of users with intermediate activity (30 users between 200th to 300th in activity).
\end{itemize}
We then randomly choose 10$\%$ of the remaining users as set C, to act as primary users and compute the stability for every (user,adversary) pair in $C \times A$ and compare it to the reachability of every (user,item) pair in $C \times B$. Figure \ref{fig:agg_instab} shows the resulting histogram for future instability.

    \begin{figure}[h]
        \centering
        \includegraphics[width=0.6\textwidth]{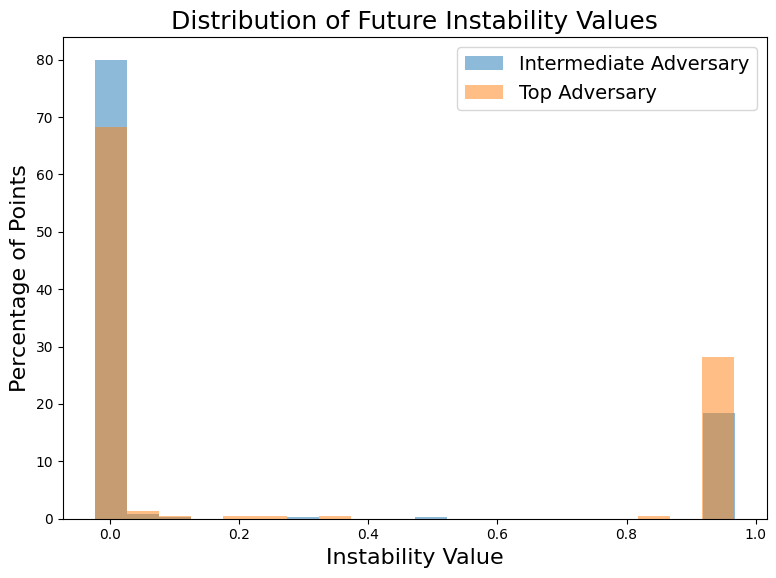}
        \caption{Future Instability(MF) for Active vs Intermediate users}
        \label{fig:agg_instab}
    \end{figure}

\begin{table}[htbp]
\centering
\begin{tabular}{|l|c|c|}
\hline
Adversary & Active User & Intermediate User \\
\hline
Mean & 0.2962 & 0.1938\\
\hline
Lower Confidence Interval & 0.2395 & 0.1549 \\
\hline
Upper Confidence Interval & 0.3527 & 0.2327 \\
\hline
\end{tabular}
\caption{Future Instability with Active vs Intermediate Adversaries}
\label{table:agg_instab}
\end{table}
\textbf{Observations:}
These results show us that active users introduce more instability on average as compared to users with intermediate levels of activity. We note that this effect is not as prominent when measuring past stability instead of future stability.

We perform both these experiments on the Matrix Factorization based recommender system.

\section{Selected figures with additional parameter values} \label{app:addn}

\vspace{1cm} %

\noindent\textbf{1) Scatterplot for reachability with additional time horizons, for comparison}
\textbf{$\beta = 0.8$}
\begin{figure}[h]
    \centering
    \includegraphics[width=0.6\textwidth]{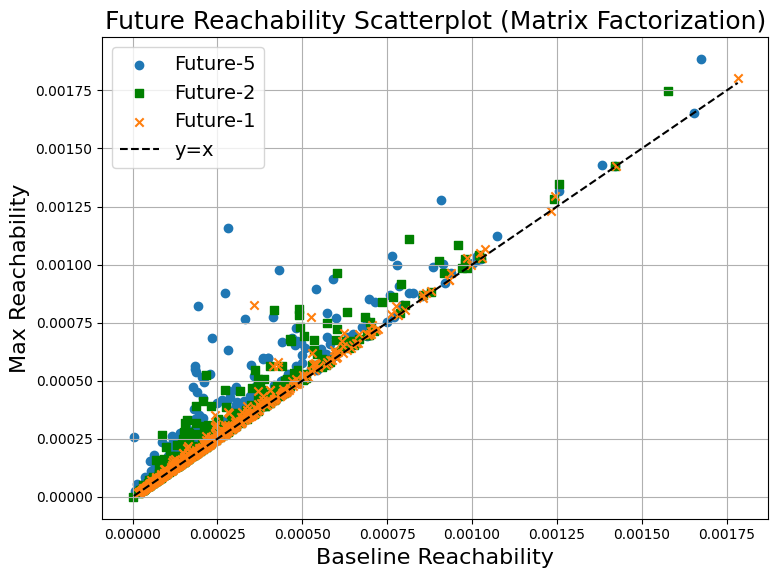}
    \caption{Future Reachability: Time horizon = 1 vs Time horizon = 2 vs Time horizon = 5}
    \label{fig:future_reachability_1_2_5}
\end{figure}
\begin{figure}[h]
    \centering
    \includegraphics[width=0.6\textwidth]{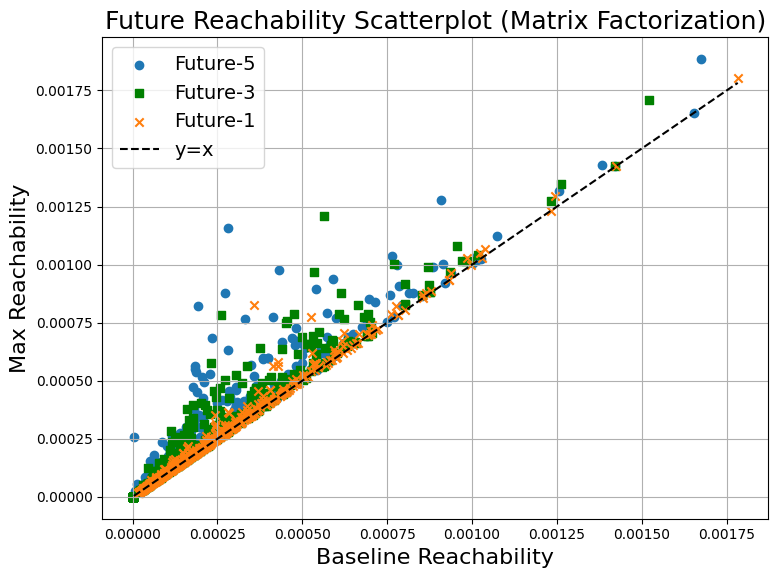}
    \caption{Future Reachability: Time horizon = 1 vs Time horizon = 3 vs Time horizon = 5}
    \label{fig:future_reachability_1_3_5}
\end{figure}
\begin{figure}[h]
    \centering
    \includegraphics[width=0.6\textwidth]{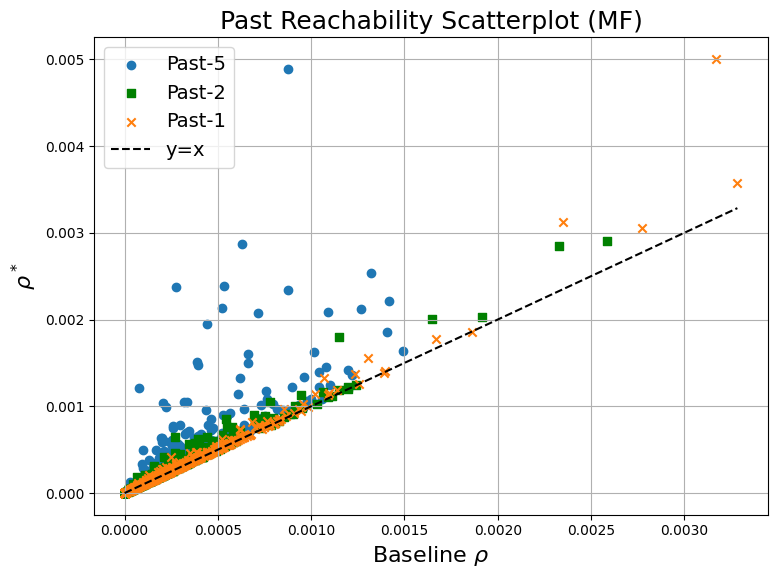}
    \caption{Past Reachability: Time horizon = 1 vs Time horizon = 2 vs Time horizon = 5}
    \label{fig:past_reachability_1_2_5}
\end{figure}
\begin{figure}[h]
    \centering
    \includegraphics[width=0.6\textwidth]{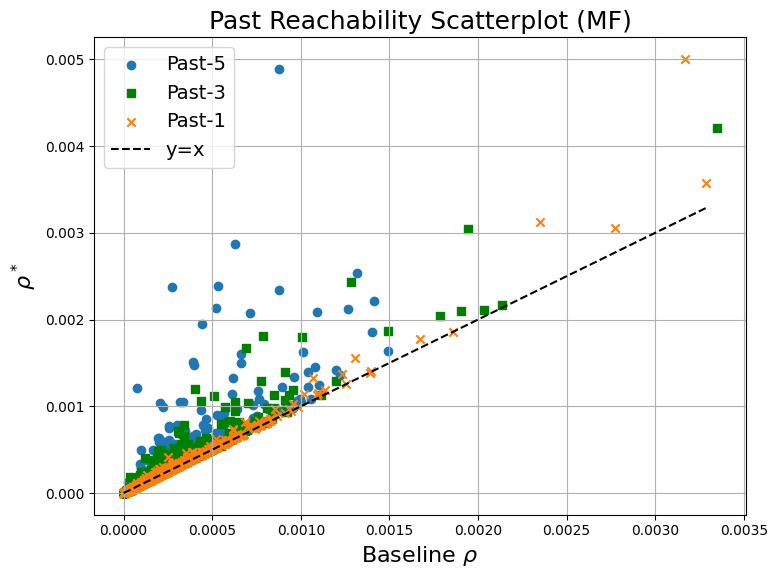}
    \caption{Past Reachability: Time horizon = 1 vs Time horizon = 3 vs Time horizon = 5}
    \label{fig:past_reachability_1_3_5}
\end{figure}

\clearpage %

\noindent\textbf{2) Scatterplots for past stability with different stochasticities}
\begin{figure}[h]
    \centering
    \includegraphics[width=\textwidth]{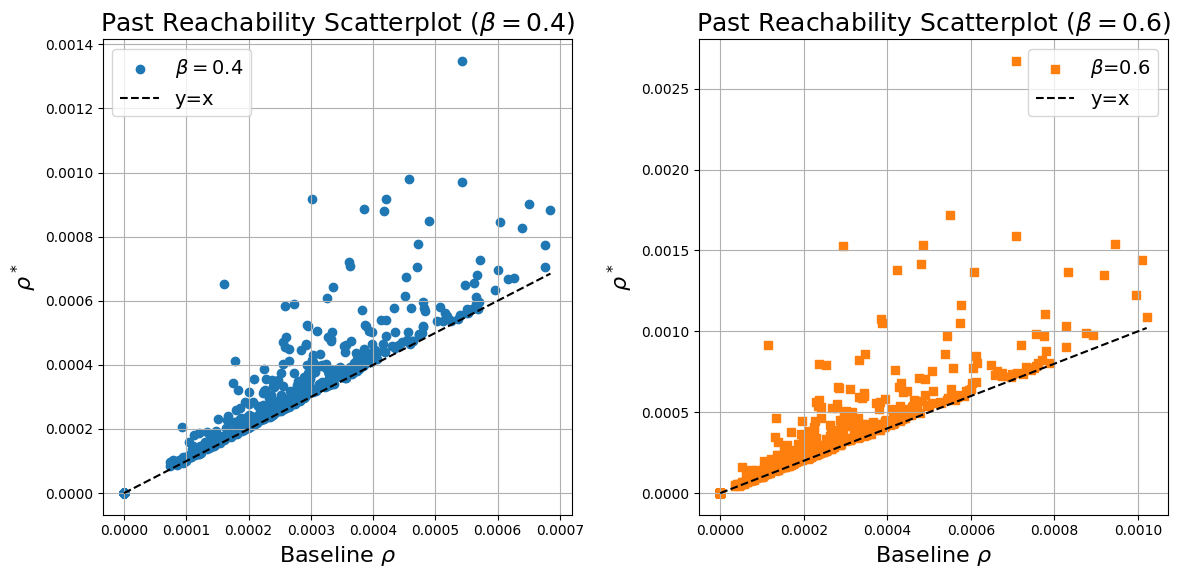}
    \caption{$\beta =$ 0.4 vs $\beta =$ 0.8}
    \label{fig:past_stability_beta_comparison}
\end{figure}

\vspace{2cm}
\noindent\textbf{3) Mean Past Instability as a function of time horizon}
\begin{figure}[h]
    \centering
    \includegraphics[width=0.8\textwidth]{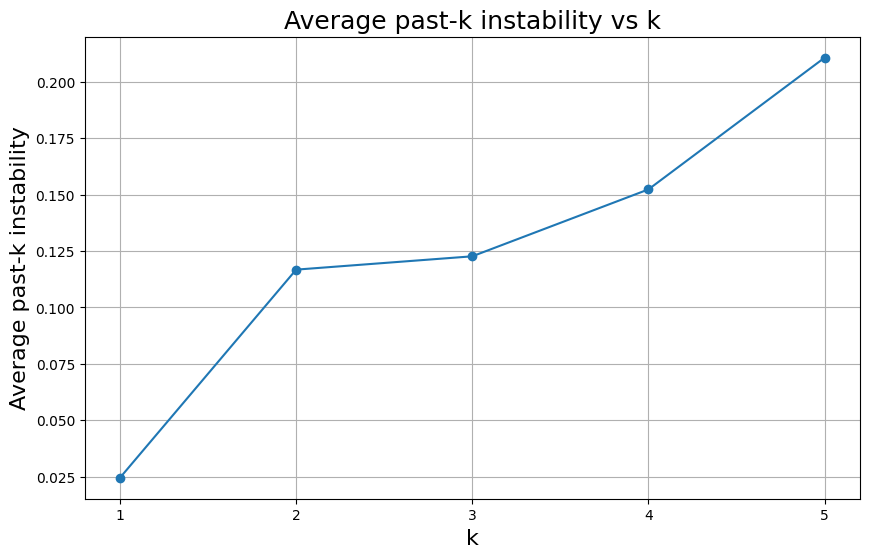}
    \caption{Mean Past Instability vs Time Horizon}
    \label{fig:mean_past_instability}
\end{figure}

\clearpage %

\noindent\textbf{4) Past stability for different time horizons}
\textbf{$\beta = 0.8$}
\begin{figure}[h]
    \centering
    \includegraphics[width=0.8\textwidth]{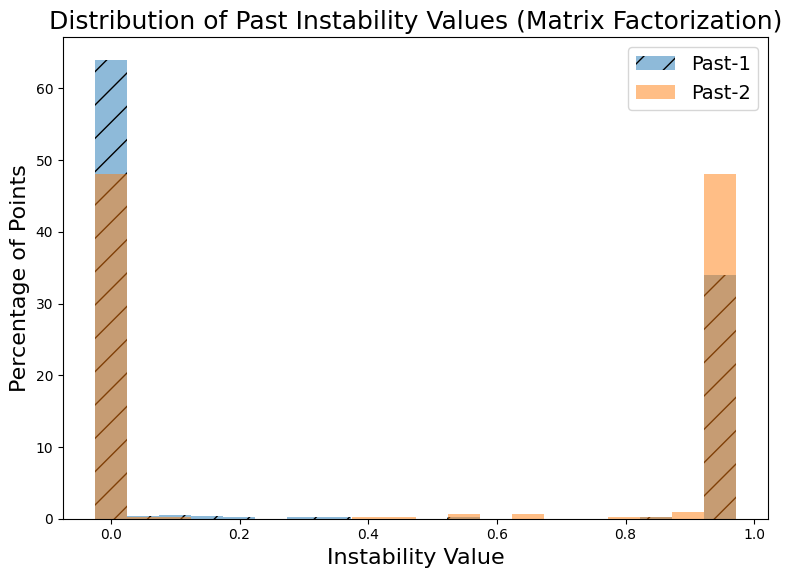}
    \caption{Time horizon=1 vs Time Horizon=2}
    \label{fig:past_stability_1_2}
\end{figure}
\begin{figure}[h]
    \centering
    \includegraphics[width=0.8\textwidth]{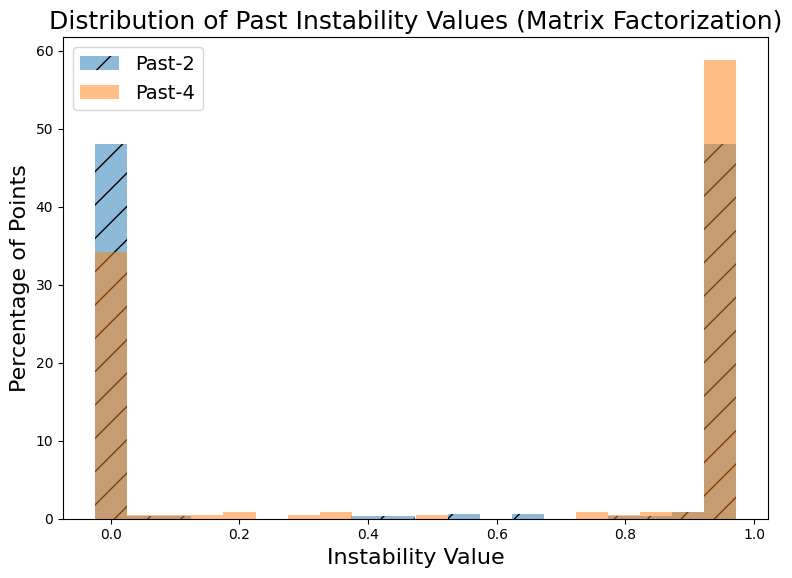}
    \caption{Time Horizon=2 vs Time Horizon=4}
    \label{fig:past_stability_2_4}
\end{figure}

\clearpage
\section{Gradient Expressions}\label{sec:all_grad}
In this section, we write out the gradient expressions for both reachability and stability for multiple timesteps to give a better idea of the gradients one might need access to in a whitebox setting.

\subsection{Reachability Gradient}
\subsubsection{2 timesteps}\label{gradeg}
\begin{itemize}
\item Gradient with respect to $f_1$

\begin{align*}
&\nabla_{f_1} \mathbb{E}_{A_{i,t}, A_{i,t+1}} \left[ P(A_{i,t+2} = j | \text{do}(O_{i,t} = f_1(A_{i,t})), \text{do}(O_{i,t+1} = f_2(A_{i,t+1}))) \right] \\
&= \sum_{a_{i,t}} P(A_{i,t} = a_{i,t}) \nabla_{f_1}  \sum_{a_{i,t+1}} \left[ P(A_{i,t+1} = a_{i,t+1} | \text{do}(O_{i,t} = f_1(a_{i,t}))) \right. \\
&\quad \left. P(A_{i,t+2} = j | \text{do}(O_{i,t+1} = f_2(a_{i,t+1})), \text{do}(O_{i,t} = f_1(a_{i,t}))) \right]
\end{align*}

\item Gradient with respect to $f_2$

\begin{align*}
&\nabla_{f_2} \mathbb{E}_{A_{i,t}, A_{i,t+1}} \left[ P(A_{i,t+2} = j | \text{do}(O_{i,t} = f_1(A_{i,t})), \text{do}(O_{i,t+1} = f_2(A_{i,t+1}))) \right] \\
&= \sum_{a_{i,t}} P(A_{i,t} = a_{i,t}) \sum_{a_{i,t+1}} P(A_{i,t+1} = a_{i,t+1} | \text{do}(O_{i,t} = f_1(a_{i,t}))) \\
&\quad \nabla_{f_2} P(A_{i,t+2} = j | \text{do}(O_{i,t+1} = f_2(a_{i,t+1})), \text{do}(O_{i,t} = f_1(a_{i,t})))
\end{align*}
\end{itemize}

\subsubsection{3 timesteps}

\begin{itemize}
\item Gradient with respect to $f_1$

\begin{align*}
\nabla_{f_1} \mathbb{E}_{A_{i,t}, A_{i,t+1}, A_{i,t+2}} [ & P(A_{i,t+3} = j | \text{do}(O_{i,t} = f_1(A_{i,t})), \nonumber \\
& \text{do}(O_{i,t+1} = f_2(A_{i,t+1})), \text{do}(O_{i,t+2} = f_3(A_{i,t+2}))) ] \nonumber \\
= \sum_{a_{i,t}} P(A_{i,t} = a_{i,t}) & \nabla_{f_1} \sum_{a_{i,t+1}} [ P(A_{i,t+1} = a_{i,t+1} | \text{do}(O_{i,t} = f_1(a_{i,t}))) \nonumber \\
& \sum_{a_{i,t+2}} P(A_{i,t+2} = a_{i,t+2} | \text{do}(O_{i,t+1} = f_2(a_{i,t+1})), \text{do}(O_{i,t} = f_1(a_{i,t}))) \nonumber \\
& P(A_{i,t+3} = j | \text{do}(O_{i,t+2} = f_3(a_{i,t+2})), \nonumber \\
& \text{do}(O_{i,t+1} = f_2(a_{i,t+1})), \text{do}(O_{i,t} = f_1(a_{i,t}))) ]
\end{align*}

\item Gradient with respect to $f_2$

\begin{align*}
&\nabla_{f_2} \mathbb{E}_{A_{i,t}, A_{i,t+1}, A_{i,t+2}} \left[ P(A_{i,t+3} = j | \text{do}(O_{i,t} = f_1(A_{i,t})), \text{do}(O_{i,t+1} = f_2(A_{i,t+1})), \text{do}(O_{i,t+2} = f_3(A_{i,t+2}))) \right] \\
&= \sum_{a_{i,t}} P(A_{i,t} = a_{i,t}) \sum_{a_{i,t+1}} P(A_{i,t+1} = a_{i,t+1} | \text{do}(O_{i,t} = f_1(a_{i,t}))) \\
&\quad \nabla_{f_2} \sum_{a_{i,t+2}} \left[ P(A_{i,t+2} = a_{i,t+2} | \text{do}(O_{i,t+1} = f_2(a_{i,t+1})), \text{do}(O_{i,t} = f_1(a_{i,t}))) \right. \\
&\quad \left. P(A_{i,t+3} = j | \text{do}(O_{i,t+2} = f_3(a_{i,t+2})), \text{do}(O_{i,t+1} = f_2(a_{i,t+1})), \text{do}(O_{i,t} = f_1(a_{i,t}))) \right]
\end{align*}

\item Gradient with respect to $f_3$

\begin{align*}
&\nabla_{f_3} \mathbb{E}_{A_{i,t}, A_{i,t+1}, A_{i,t+2}} \left[ P(A_{i,t+3} = j | \text{do}(O_{i,t} = f_1(A_{i,t})), \text{do}(O_{i,t+1} = f_2(A_{i,t+1})), \text{do}(O_{i,t+2} = f_3(A_{i,t+2}))) \right] \\
&= \sum_{a_{i,t}} P(A_{i,t} = a_{i,t}) \sum_{a_{i,t+1}} P(A_{i,t+1} = a_{i,t+1} | \text{do}(O_{i,t} = f_1(a_{i,t}))) \\
&\quad \sum_{a_{i,t+2}} P(A_{i,t+2} = a_{i,t+2} | \text{do}(O_{i,t+1} = f_2(a_{i,t+1})), \text{do}(O_{i,t} = f_1(a_{i,t}))) \\
&\quad \nabla_{f_3} P(A_{i,t+3} = j | \text{do}(O_{i,t+2} = f_3(a_{i,t+2})), \text{do}(O_{i,t+1} = f_2(a_{i,t+1})), \text{do}(O_{i,t} = f_1(a_{i,t})))
\end{align*}

\end{itemize}

\subsubsection{T timesteps, gradient wrt $f_k$}

\begin{align*}
&\nabla_{f_k} \mathbb{E}_{A_{i,t}, ..., A_{i,t+T-1}} [ P(A_{i,t+T} = j | \text{do}(O_{i,t} = f_1(A_{i,t})), ..., \text{do}(O_{i,t+T-1} = f_T(A_{i,t+T-1}))) ] \\
&= \sum_{a_{i,t}} P(A_{i,t} = a_{i,t}) \sum_{a_{i,t+1}} P(A_{i,t+1} = a_{i,t+1} | \text{do}(O_{i,t} = f_1(a_{i,t}))) \cdots \\
&\quad \sum_{a_{i,t+k-1}} P(A_{i,t+k-1} = a_{i,t+k-1} | \text{do}(O_{i,t+k-2} = f_{k-1}(a_{i,t+k-2})), ..., \text{do}(O_{i,t} = f_1(a_{i,t}))) \\
&\quad \nabla_{f_k} \sum_{a_{i,t+k}} [ P(A_{i,t+k} = a_{i,t+k} | \text{do}(O_{i,t+k-1} = f_k(a_{i,t+k-1})), ..., \text{do}(O_{i,t} = f_1(a_{i,t}))) \\
&\quad\quad \cdots \\
&\quad\quad \sum_{a_{i,t+T-1}} P(A_{i,t+T} = j | \text{do}(O_{i,t+T-1} = f_T(a_{i,t+T-1})), ..., \text{do}(O_{i,t} = f_1(a_{i,t}))) ]
\end{align*}

\subsection{Stability Gradient}

\subsubsection{1 timestep}

Gradient wrt $f_1$

\begin{align*}
& \nabla_{f_1} \mathbb{E}_{\recommendationRV_{i_2, t}} \bigg[d(\mathbb{P}(\recommendationRV_{i_1, t+1} | \text{do}(\useractionRV_{i_2,t} = f_1(\recommendationRV_{i_2,t}))), \mathbb{P}(\recommendationRV_{i_1, t+1}))\bigg] \nonumber \\
&= \sum_{a_{i_2,t}} P(\recommendationRV_{i_2,t} = a_{i_2,t}) \nabla_{f_1} \bigg[d(\mathbb{P}(\recommendationRV_{i_1, t+1} | \text{do}(\useractionRV_{i_2,t} = f_1(a_{i_2,t}))), \mathbb{P}(\recommendationRV_{i_1, t+1}))\bigg]
\end{align*}

\subsubsection{2 timesteps}

\begin{itemize}
\item Gradient wrt $f_1$

\begin{align*}
& \nabla_{f_1} \mathbb{E}_{\recommendationRV_{i_2, t:t+1}} \bigg[d(\mathbb{P}(\recommendationRV_{i_1, t+2} | \text{do}(\useractionRV_{i_2,t} = f_1(\recommendationRV_{i_2,t})), \text{do}(\useractionRV_{i_2,t+1} = f_2(\recommendationRV_{i_2,t+1}))), \mathbb{P}(\recommendationRV_{i_1, t+2}))\bigg] \nonumber \\
&= \sum_{a_{i_2,t}} P(\recommendationRV_{i_2,t} = a_{i_2,t}) \nabla_{f_1} \sum_{a_{i_2,t+1}} \bigg[ P(\recommendationRV_{i_2,t+1} = a_{i_2,t+1} | \text{do}(\useractionRV_{i_2,t} = f_1(a_{i_2,t}))) \nonumber \\
& \quad d(\mathbb{P}(\recommendationRV_{i_1, t+2} | \text{do}(\useractionRV_{i_2,t} = f_1(a_{i_2,t})), \text{do}(\useractionRV_{i_2,t+1} = f_2(a_{i_2,t+1}))), \mathbb{P}(\recommendationRV_{i_1, t+2})) \bigg]
\end{align*}

\item Gradient with $f_2$

\begin{align*}
& \nabla_{f_2} \mathbb{E}_{\recommendationRV_{i_2, t:t+1}} \bigg[d(\mathbb{P}(\recommendationRV_{i_1, t+2} | \text{do}(\useractionRV_{i_2,t} = f_1(\recommendationRV_{i_2,t})), \text{do}(\useractionRV_{i_2,t+1} = f_2(\recommendationRV_{i_2,t+1}))), \mathbb{P}(\recommendationRV_{i_1, t+2}))\bigg] \nonumber \\
&= \sum_{a_{i_2,t}} P(\recommendationRV_{i_2,t} = a_{i_2,t}) \sum_{a_{i_2,t+1}} P(\recommendationRV_{i_2,t+1} = a_{i_2,t+1} | \text{do}(\useractionRV_{i_2,t} = f_1(a_{i_2,t}))) \nonumber \\
& \quad \nabla_{f_2} \bigg[ d(\mathbb{P}(\recommendationRV_{i_1, t+2} | \text{do}(\useractionRV_{i_2,t} = f_1(a_{i_2,t})), \text{do}(\useractionRV_{i_2,t+1} = f_2(a_{i_2,t+1}))), \mathbb{P}(\recommendationRV_{i_1, t+2})) \bigg]
\end{align*}
\end{itemize}

\subsubsection{T timesteps. gradient wrt $f_k$}

\begin{align*}
& \nabla_{f_k} \mathbb{E}_{\recommendationRV_{i_2, t:t+T-1}} \bigg[d(\mathbb{P}(\recommendationRV_{i_1, t+T} | \text{do}(\useractionRV_{i_2,t} = f_1(\recommendationRV_{i_2,t})), \ldots, \text{do}(\useractionRV_{i_2,t+T-1} = f_T(\recommendationRV_{i_2,t+T-1}))), \mathbb{P}(\recommendationRV_{i_1, t+T}))\bigg] \nonumber \\
&= \sum_{a_{i_2,t}} P(\recommendationRV_{i_2,t} = a_{i_2,t}) \sum_{a_{i_2,t+1}} P(\recommendationRV_{i_2,t+1} = a_{i_2,t+1} | \text{do}(\useractionRV_{i_2,t} = f_1(a_{i_2,t}))) \cdots \nonumber \\
& \quad \sum_{a_{i_2,t+k-1}} P(\recommendationRV_{i_2,t+k-1} = a_{i_2,t+k-1} | \text{do}(\useractionRV_{i_2,t+k-2} = f_{k-1}(a_{i_2,t+k-2})), \ldots, \text{do}(\useractionRV_{i_2,t} = f_1(a_{i_2,t}))) \nonumber \\
& \quad \nabla_{f_k} \sum_{a_{i_2,t+k}} \bigg[ P(\recommendationRV_{i_2,t+k} = a_{i_2,t+k} | \text{do}(\useractionRV_{i_2,t+k-1} = f_k(a_{i_2,t+k-1})), \ldots, \text{do}(\useractionRV_{i_2,t} = f_1(a_{i_2,t}))) \nonumber \\
& \quad \cdots \sum_{a_{i_2,t+T-1}} d(\mathbb{P}(\recommendationRV_{i_1, t+T} | \text{do}(\useractionRV_{i_2,t} = f_1(a_{i_2,t})), \ldots, \text{do}(\useractionRV_{i_2,t+T-1} = f_T(a_{i_2,t+T-1}))), \mathbb{P}(\recommendationRV_{i_1, t+T})) \bigg]
\end{align*}

\subsection{Reachability gradient computation for deterministic (top-1) item choice}
\label{sec:determinic_gradient}
\llreplace{For simplicity, we assume}{In our experiments, we set} the user \lledit{to} always select\lldelete{s} the top item recommended by the system. While sampling items from a user's preference distribution (e.g., using Gumbel-softmax) is possible, it introduces more variability as user's now interact with a more diverse variety of item sequences, which necessitates additional parameter updates since our parameter space consists of a separate parameter for every (item, timestep) tuple. Additionally, we demonstrate below how top-1 selection affects gradient computation, with the subgradient of the item selection term reducing to 0, which is not the case for Gumbel-softmax selection.
Example of 2-step gradient computation with respect to $f_1$:
From \ref{gradeg},
\begin{align*}
&\nabla_{f_1} \mathbb{E}_{A_{i,t}, A_{i,t+1}} \left[ P(A_{i,t+2} = j | \text{do}(O_{i,t} = f_1(A_{i,t})), \text{do}(O_{i,t+1} = f_2(A_{i,t+1}))) \right] \\
&= \sum_{a_{i,t}} P(A_{i,t} = a_{i,t}) \nabla_{f_1}  \sum_{a_{i,t+1}} \left[ P(A_{i,t+1} = a_{i,t+1} | \text{do}(O_{i,t} = f_1(a_{i,t}))) \right. \\
&\quad \left. P(A_{i,t+2} = j | \text{do}(O_{i,t+1} = f_2(a_{i,t+1})), \text{do}(O_{i,t} = f_1(a_{i,t}))) \right]
\end{align*}

We focus on the terms dependent on $f_1$.
\begin{align*}
\nabla_{f_1}  \sum_{a_{i,t+1}} \left[ P(A_{i,t+1} = a_{i,t+1} | \text{do}(O_{i,t} = f_1(a_{i,t}))) \right.\left. P(A_{i,t+2} = j | \text{do}(O_{i,t+1} = f_2(a_{i,t+1})), \text{do}(O_{i,t} = f_1(a_{i,t}))) \right]
\end{align*}
This can be written as:
\begin{align*}
\nabla_{f_1} \sum_{k \in [n]} (\prod_{l \neq k} \mathbb{I}_{g(k,t) \geq g(l,t)}) m(f_1,f_2)
\end{align*}
Here $[n]$ denotes the list of item ids(from 1 to $n$) and $g$ is a function that takes an item and timestep as arguments and returns the user's preference score for that item at that timestep. 
The condition specified by the product of indicator functions is simply that $k$ is the item that the user has the highest preference score for, in other words, $k$ is the top-1 recommended item. We have also written $P(A_{i,t+2} = j | \text{do}(O_{i,t+1} = f_2(a_{i,t+1})), \text{do}(O_{i,t} = f_1(a_{i,t})))$ as a function of $f_1$ and $f_2$, $m(f_1, f_2)$.
Using the product rule on this expression, we get,
\begin{align*}
    \sum_{k \in [n]}  (\prod_{l \neq k} \mathbb{I}_{g(k,t) \geq g(l,t)} \nabla_{f_1} m(f_1, f_2) + m(f_1, f_2) \nabla_{f_1} (\prod_{l \neq k} \mathbb{I}_{g(k,t) \geq g(l,t)}))
\end{align*}
This reduces to:
\begin{align*}
    \sum_{k \in [n]}  (\prod_{l \neq k} \mathbb{I}_{g(k,t) \geq g(l,t)} \nabla_{f_1} m(f_1, f_2))
\end{align*}
as the 0 is a subgradient of the product of indicator functions with respect to $f_1$. Therefore, in  making the simplification to users only choosing the top-1 item, we do away with the gradient propagation path through the item choice itself.

\clearpage

\end{document}